\def\eqref#1{equation~\ref{#1}}
\def\1{\bm{1}}
\DeclareMathAlphabet{\mathsfit}{\encodingdefault}{\sfdefault}{m}{sl}
\SetMathAlphabet{\mathsfit}{bold}{\encodingdefault}{\sfdefault}{bx}{n}
\DeclareMathOperator*{\argmin}{arg\,min}
\newcommand*\diff{\mathop{}\!\mathrm{d}}
\DeclareMathOperator{\BR}{\underline{L}}
\DeclareMathOperator{\CR}{L}
\DeclareMathOperator{\weta}{\widehat{\eta}}
\newcommand{\wy}{\widehat{y}}
\newcommand{\z}{{\bf z}}
\newcommand{\X}{\mathcal{X}}
\newcommand{\Y}{\mathcal{Y}}
\newcommand{\wz}[1]{#1}
\newtheorem{example}{Example}
\newtheorem{lemma}{Lemma}
\newtheorem{assumption}{Assumption}
\newtheorem{theorem}{Theorem}
\newtheorem{remark}{Remark}
\newtheorem{corollary}{Corollary}
\title{Binary Losses for Density Ratio Estimation}
\author{Werner Zellinger \\
ELLIS Unit Linz and LIT AI Lab, Institute for Machine Learning\\
Johannes Kepler University Linz, Austria\\
\texttt{zellinger@ai-lab.at}
}
\begin{document}

\doparttoc 
\faketableofcontents 
\part{} 

\maketitle

\begin{abstract}
Estimating the ratio of two probability densities from a finite number of observations is a central machine learning problem. A common approach is to construct estimators using binary classifiers that distinguish observations from the two densities. However, the accuracy of these estimators depends on the choice of the binary loss function, raising the question of which loss function to choose based on desired error properties.
For example, traditional loss functions, such as logistic or boosting loss, prioritize accurate estimation of small density ratio values over large ones, even though the latter are more critical in many applications.

In this work, we start with prescribed error measures in a class of Bregman divergences and characterize all loss functions that result in density ratio estimators with small error.
Our characterization extends results on composite binary losses from~\citep{reid2010composite} and their connection to density ratio estimation as identified by~\citep{menon2016linking}.
As a result, we obtain a simple recipe for constructing loss functions with certain properties, such as those that prioritize an accurate estimation of large density ratio values.
Our novel loss functions outperform related approaches for resolving parameter choice issues of 11 deep domain adaptation algorithms in average performance across 484 real-world tasks including sensor signals, texts, and images.
\end{abstract}

\section{Introduction}
\vspace{-.5em}
Estimating the Radon-Nikod\'ym derivative $\beta:=\frac{\diff P}{\diff Q}:\mathcal{X}\to\mathbb{R}$ of some probability measure $P$ with respect to another probability measure $Q$, from iid observations $(x_i)_{i=1}^n$ and $(x_i')_{i=1}^m$ of the two measures, respectively, is a common machine learning task; with applications in two-sample testing ~\citep{neyman1933ix,kanamori2011f,hagrass2022spectral}, anomaly detection~\citep{smola2009relative,hido2011statistical}, divergence estimation~\citep{nguyen2010estimating,rhodes2020telescoping}, covariate shift adaptation~\citep{shimodaira2000improving,dinu2022aggregation,gruber2024overcoming}, energy-based modeling~\citep{gutmann2012noise,tu2007learning}, generative modeling~\citep{mohamed2016learning,grover2019bias}, conditional density estimation~\citep{schuster2020kernel}, and classification from positive and unlabeled data~\citep{kato2019learning}.
When both measures $P$ and $Q$ have densities w.r.t.~the Lebesgue reference measure, then $\beta$ equals the \textit{ratio of their densities}.

A large class of methods for density ratio estimation follows Algorithm~\ref{alg:dre_by_cpe} to construct an estimator $\widehat{\beta}$ of $\beta$ from binary classifiers
which separate the observations $(x_i)_{i=1}^n$ of $P$ from the observations $(x_i')_{i=1}^m$ of $Q$~\citep{qin1998inferences,bickel2007discriminative,cheng2004semiparametric,sugiyama2012density,menon2016linking,kato2021non,zellinger2023adaptive}.
The estimator $\widehat{\beta}$ constructed by Algorithm~\ref{alg:dre_by_cpe} converges (for $m, n\to\infty$) to the minimizer of a Bregman divergence\footnote{Assuming $\beta\in L^1(Q)$ and defining $F_\phi:L^1(Q)\to\mathbb{R}$ by $F_\phi(h):=\int_\mathcal{X} \phi(h(x))\diff Q(x)$ identifies Eq.~\ref{eq:Bregman_divergence_introduction} as a Bregman divergence~\citep{bregman1967relaxation} of the form $B_\phi(\beta,\widehat{\beta})=F_\phi(\beta)-F_\phi(\widehat{\beta})-\nabla F_\phi(\widehat{\beta})[\beta-\widehat{\beta}]$.}
\begin{align}
    \label{eq:Bregman_divergence_introduction}
    B_\phi(\beta,\widehat{\beta}):=\mathbb{E}_{x\sim Q}\!\left[ \phi\!\left(\beta\left(x\right)\right)-\phi(\widehat{\beta}(x))-\phi'(\widehat{\beta}(x)) (\beta(x)-\widehat{\beta}(x)) \right]
\end{align}
with generator $\phi:\mathbb{R}\to\mathbb{R}$~\citep{sugiyama2012densitybregman,menon2016linking,zellinger2023adaptive}.

However, the error measure $B_\phi$ in Eq.~\ref{eq:Bregman_divergence_introduction} is only implicitly defined by the choice of the binary loss function $\ell$ used to instantiate Algorithm~\ref{alg:dre_by_cpe}.
That is, choosing the wrong loss function might lead to unexpected behavior of the density ratio estimator.
For example, typical loss functions for Algorithm~\ref{alg:dre_by_cpe} are the logistic loss used in~\citep[Section~7]{bickel2009discriminative}, the boosting loss applied in~\citep[Section~7]{menon2016linking}, the square loss~\citep[Table~1]{menon2016linking} and logarithm-based loss functions~\citep{nguyen2007estimating,nguyen2010estimating,sugiyama2008direct}.
All of the loss functions above lead to error measures in Eq.~\ref{eq:Bregman_divergence_introduction} that put higher weight on smaller values of the density ratio, despite the fact that large values are crucial in many applications, see~\citep{menon2016linking}.
\vspace{.5em}
\begin{algorithm}[t]
\SetKwInOut{Input}{Input}
\SetKwInOut{Output}{Output}
\SetKwInOut{Return}{Return}
\SetKwInOut{Setup}{Setup}
\SetKwInOut{StepOne}{Step 1}
\SetKwInOut{StepTwo}{Step 2}
\SetAlgoLined
\Setup{Loss function $\ell:\{-1,1\}\times \mathbb{R}\to\mathbb{R}$, invertible probability link function $\Psi^{-1}:\mathbb{R}\to[0,1]$ and function class $\mathcal{F}\subset \{f:\mathbb{R}\to\mathbb{R}\}$.}
\Input{Observations $(x_i)_{i=1}^n\sim P$ and $(x_i')_{i=1}^m\sim Q$.}
\Output{Estimator $\widehat{\beta}:\mathcal{X}\to\mathbb{R}$ of Radon-Nikod\'ym derivative $\beta=\frac{\diff P}{\diff Q}$.}
\vspace{0.1em}
\StepOne{Compute binary classifier
\begin{align}
    \label{eq:empirical_risk_minimizer}
    \widehat{f}:=\argmin_{f\in\mathcal{F}} \frac{1}{m+n}
    \sum_{(x,y)\in (x_i,1)_{i=1}^n\cup (x_i',-1)_{i=1}^m} \ell(y,f(x))
\end{align}
and estimate by $\Psi^{-1}\!\left(\widehat{f}(x)\right)$ the probability\footnotemark $\rho(y=1|x)$ that $x$ is drawn from $P$.}
\vspace{.1em}
\StepTwo{Construct $\widehat{\beta}$ using Bayes theorem
$
    \widehat{\beta}(x):=\frac{\Psi^{-1}\!\left(\widehat{f}(x)\right)}{1-\Psi^{-1}\!\left(\widehat{f}(x)\right)}\approx \frac{\rho(y=1|x)}{\rho(y=-1|x)}= \beta(x)$.}
 \caption{Density ratio estimation by binary classification}
  \label{alg:dre_by_cpe}
\end{algorithm}
\footnotetext[2]{
Denote by $y=1$ the event "drawn from $P$" and by $y=-1$ the event "drawn from $Q$".
Then, the probability measure $\rho$ on $\X\times\{-1,1\}$ is defined by $\rho(x|y=1):=P(x)$, $\rho(x|y=-1):=Q(x)$ and the marginal (w.r.t.~$y$) Bernoulli measure $\rho_\Y$ realizing $y=1$ with probability $\frac{1}{2}$ and $y=-1$ otherwise.
It follows that
$\beta(x)=\frac{\rho(x|y=1)}{\rho(x|y=-1)}=\frac{\rho(x,y=1)\rho_\mathcal{Y}(y=-1)}{\rho(x,y=-1)\rho_\mathcal{Y}(y=1)}=\frac{\rho(x,y=1)}{\rho(x,y=-1)}=\frac{\rho(x,y=1)\rho_\mathcal{X}(x)}{\rho(x,y=-1)\rho_\mathcal{X}(x)}=\frac{\rho(y=1|x)}{\rho(y=-1|x)}$.
}

In this work, instead of choosing a classical loss function for Algorithm~\ref{alg:dre_by_cpe}, we propose to start from an error measure $B_\phi$ in Eq.~\ref{eq:Bregman_divergence_introduction}.
Our first contribution is, given such an error measure, to \wz{combine and extend results from~\citet{reid2010composite,reid2011information} and~\citet{menon2016linking}} for characterizing all loss functions that lead to a minimization of this measure $B_\phi$ by Algorithm~\ref{alg:dre_by_cpe}.
More precisely, given some (strictly convex) Bregman generator $\phi:[0,\infty)\to\mathbb{R}$ and some (monotonically increasing) density ratio link $g:\mathbb{R}\to\mathbb{R}$, we provide the \textit{unique} form of a (strictly proper composite) loss function $\ell:\{-1,1\}\times\mathbb{R}\to\mathbb{R}$ and invertible link $\Psi:[0,1]\to\mathbb{R}$ such that Algorithm~\ref{alg:dre_by_cpe} minimizes $B_\phi(\beta,\widehat{\beta})$ for $\widehat{\beta}:=g\circ \widehat{f}$\wz{, e.g., $g(z):=\frac{z}{1-z}$ in Algorithm~\ref{alg:dre_by_cpe}.}
\wz{Applying a result of~\citep{reid2010composite}} shows that the choice $g(\wy):=(\phi')^{-1}(\wy)$ leads to convex loss functions.
\vspace{.5em}

Our characterizations provide us with a simple recipe for constructing convex loss functions that allow Algorithm~\ref{alg:dre_by_cpe} to find estimators with certain properties.
Our second contribution is to provide novel loss functions that (in contrast to classical losses) prioritize an accurate estimation of large density ratio values over smaller ones, see Figure~\ref{fig:introduction}.
\vspace{.5em}

One important application of density ratio estimation is importance weighting.
Our third contribution is to show that our novel loss functions can lead to improvements for the parameter selection procedures of~\citet{sugiyama2007covariate,dinu2022aggregation} when their importance weights are computed with our novel loss function instead of the ones of Example~\ref{ex:example_in_introduction}.
More precisely, importance weighted validation~\citep{sugiyama2007covariate} (respectively importance weighted aggregation~\citep{dinu2022aggregation}) selects better parameters on two (respectively three) out of three datasets when it is based on our loss function instead of the ones in Example~\ref{ex:example_in_introduction}; where "better" refers to a higher accuracy on average over 11 deep learning algorithms and all domain adaptation tasks of three datasets for text data~\citep{blitzer2006domain}, human body sensor signals~\citep{stisen2015smart,ragab2023adatime} and images~\citep{peng2019moment,zellinger2021balancing}.

Summarizing above, we provide a complete set of techniques for designing binary loss functions for Algorithm~1 that (a) prioritize the estimation of large density ratio values and (b) allow high performance in practice.
Our three key contributions are:
\begin{itemize}
    \item Characterization (Section~\ref{sec:characterization}): Given an error measure $B_\phi(\beta,\widehat{\beta})$, we extend results from~\citet{reid2010composite,reid2011information,menon2016linking} and characterize all loss functions that lead to a minimization of $B_\phi(\beta,\widehat{\beta})$ by Algorithm~\ref{alg:dre_by_cpe}.
    \item Loss Functions (Section~\ref{sec:large_density_ratio_estimation}): We design novel loss functions with increasing weight functions $\phi''$ (cf.~Figure~\ref{fig:introduction}).
    \item Experiments (Section~\ref{sec:empirical_evaluations}): We obtain state-of-the-art average performance in benchmark experiments for re-solving parameter choice issues in deep domain adaptation; involving 9174 neural networks, 484 real-world tasks from three datasets for texts, images and human body sensor signals.
\end{itemize}%
\vspace{-1em}

\begin{figure}[t]
    \centering
    \includegraphics[width=\linewidth]{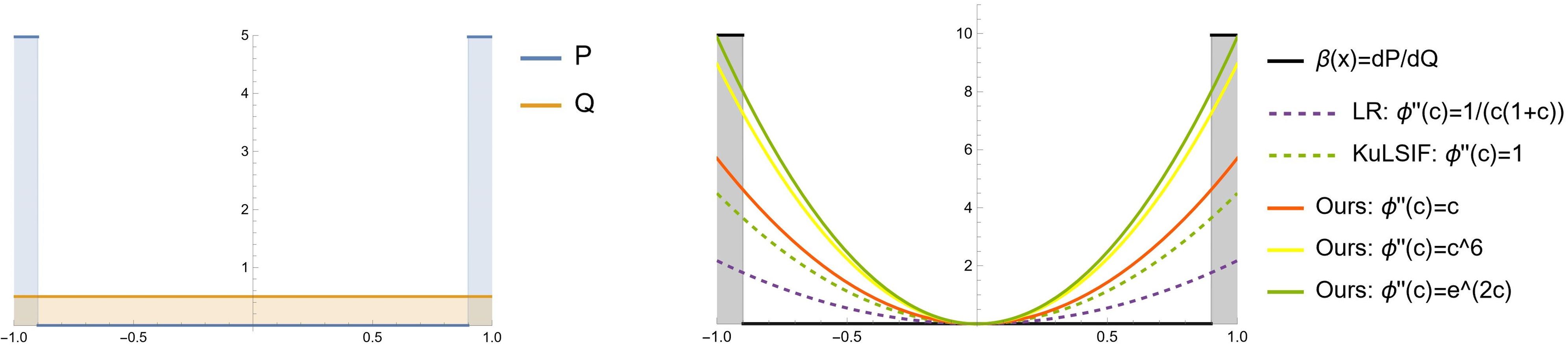}
    \caption{
    Left: Two piecewise constant probability measures $P$ and $Q$ on $[-1,1]$. Right: Estimators $\widehat{\beta}(x):= k x^2 + d$ of density ratio $\beta=\frac{\diff P}{\diff Q}$ with $k,d>0$ computed by Algorithm~\ref{alg:dre_by_cpe} (for $m+n\to\infty$).
    Compared to LR~\citep{bickel2009discriminative} and KuLSIF~\citep{kanamori2009least}, our estimators originate from error measures $B_\phi$ in Eq.~\ref{eq:Bregman_divergence_introduction} with increasing weight functions $\phi''(c)=c,c^6,e^{2c}$ and consequently obtain better estimates for large values $\beta(x), x\in[-1,-0.9]\cup[0.9,1]$, see Section~\ref{sec:large_density_ratio_estimation}.
    }
    \label{fig:introduction}
\end{figure}

\section{Related Work}
\label{sec:related_work}

It has been first observed by~\citet{sugiyama2012densitybregman} that many density ratio estimation methods minimize a Bregman divergence of the form of Eq.~\ref{eq:Bregman_divergence_introduction}, see Example~\ref{ex:example_in_introduction}.
\begin{example}
\label{ex:example_in_introduction}
~\vspace{-.5em}
\begin{itemize}[leftmargin=1em]
\setlength\itemsep{-.2em}
    \item Following~\cite[Section~3.2.1]{sugiyama2012densitybregman}, the kernel unconstrained least-squares importance fitting procedure (KuLSIF) of~\cite{kanamori2009least} uses $\ell(1,y):=-y$, $\ell(-1,y):=\frac{1}{2} y^2$ and the link function $\Psi^{-1}(y):=\frac{y}{1+y}$ in Algorithm~\ref{alg:dre_by_cpe}. Eq.~\ref{eq:Bregman_divergence_introduction} is realized with $\phi(x)=(x-1)^2/2$ and $\widehat{\beta}(x)=\widehat{f}(x)$ (cf.~\citet[Eq.~(4)]{kanamori2012statistical}) such that
     $B_\phi(\beta,\widehat{\beta})=\frac{1}{2} \big\lVert \beta-\widehat{\beta}\big\rVert_{L^2(Q)}^2$.
        \item The logistic regression~\citep{nelder1972generalized} approach (LR) applied in~\citep[Section~7]{bickel2009discriminative} realizes Algorithm~\ref{alg:dre_by_cpe} with $\ell(1,y):=\log(1+e^{-y})$, $\ell(-1,y):=\log(1+e^{y})$ and link $\Psi^{-1}(y):=(1+e^{-y})^{-1}$.
        The implicit error measure is Eq.~\ref{eq:Bregman_divergence_introduction} with $\phi(x)=x\log(x)-(1+x)\log(1+x)$ and $\widehat{\beta}(x)=e^{\widehat{f}(x)}$; see~\citep[Section~3.2.3]{sugiyama2012densitybregman}.
        \item The Kullback-Leibler estimation procedure (KLest) of~\cite{nguyen2010estimating}
        uses $\ell(1,y):=-\log(x)$, $\ell(-1,y):=y$ and $\Psi^{-1}(y):=\frac{y}{1+y}$, leading to Eq.~\ref{eq:Bregman_divergence_introduction} with $\phi= x\log(x)-x$ and $\widehat{\beta}=\widehat{f}$; see~\citep[Section~3.2.4]{sugiyama2012densitybregman} and cf.~\citep{sugiyama2008directest}.
        \item The exponential loss (Boost) defined by $\ell(1,y):=e^{- y}$ and $\ell(-1,y):=e^{y}$, as applied in AdaBoost~\citep{freund1995desicion}, is used in~\citep[Section~7]{menon2016linking} with link $\Psi^{-1}(y):=(1+e^{-2 y})^{-1}$. Eq.~\ref{eq:Bregman_divergence_introduction} is used with $\phi(x)=x^{-3/2}$ and $\widehat{\beta}(x)=e^{2\widehat{f}(x)}$.
    \end{itemize}
\end{example}
A general formula of the Bregman divergence for given (strictly proper composite) loss functions has been derived by~\citet{menon2016linking} \wz{based on characterizations of~\citet{reid2010composite}}.
We extend their result by showing that the constructions in~\citep[Appendix~B]{menon2016linking} are in fact \textit{necessary} if their loss representation~\citep[Proposition~3]{menon2016linking} holds.
Our proofs are based on the theoretical works of~\citet{shuford1966admissible,savage1971elicitation,gneiting2007strictly,buja2005loss,reid2010composite,bao2023proper} on proper loss functions.

There is a line of research concerned with the sampling behavior of algorithms following Eq.~\ref{eq:Bregman_divergence_introduction}.
Specifically, for the special case of KuLSIF, error rates (for $m+n\to\infty$) have been discovered by~\citet{kanamori2012statistical,gizewski2022regularization,nguyen2023regularized}.
These results have been extended by~\citet{zellinger2023adaptive,gruber2024overcoming} to the general case of Eq.~\ref{eq:Bregman_divergence_introduction} for loss functions that satisfy a self-concordance property.
Although, our (continuous) analysis relies on similar theoretical arguments, except for the polynomial case of $k=0$, our novel loss functions are not self-concordant. That is, these results do not apply without modification.

Situations of quite dis-similar distributions have been recently discussed by~\citet{rhodes2020telescoping} (density-chasm problem), \citet{kato2021non,Kiryo:17} (PU-learning) and~\citet{Srivastava:23}.
Our work can be seen as analysis of a sub-problem of these works, where $P$ is large and $Q$ is small.
\citet{Choi:21} propose an approach by training normalizing flows to obtain closer and simpler densities.

Another recent line of research applies density ratio estimation for correcting deep generative diffusion models~\citep{Kim:23}.
In~\citep{kim2024training}, the Bregman divergence approach in Eq.~\ref{eq:Bregman_divergence_introduction} is extended and further interesting Bregman divergences are identified. 

\section{Notation and Problem}
\label{sec:notation}

Let $P, Q\in\mathcal{M}_+^1(\mathcal{X})$ be two probability measures from the set $\mathcal{M}_+^1(\mathcal{X})$ of probability measures on a compact space $\mathcal{X}\subset\mathbb{R}^d$, $d\in\mathbb{N}$ such that $P$ is absolutely continuous with respect to $Q$, $P\ll Q$.
\citet{sugiyama2012densitybregman} observed that many methods for estimating the density ratio $\beta:=\frac{\diff P}{\diff Q}$ are minimizations of estimated Bregman divergences Eq.~\ref{eq:Bregman_divergence_introduction}.
\citet{menon2016linking} further proved that there is in fact a general mathematical structure underlying this observation, which we review in the following.

We start with the data generation model of~\citet{reid2010composite,menon2016linking}:
a probability measure $\rho$ on $\mathcal{X}\times\Y$ for $\Y:=\{-1,1\}$ with conditional probability measures\footnote{Existence follows from $\mathcal{X}\times\mathcal{Y}$ being Polish (cf.~\citep[Theorem~10.2.1]{dudley2018real}, the product case).} defined by $\rho(x|y=1):=P(x)$ and $\rho(x|y=-1):=Q(x)$ and a marginal (w.r.t.~$y$) probability measure $\rho_\Y$ defined as Bernoulli measure with probability\footnote{Our analysis can be generalized to probabilities $\pi\neq\frac{1}{2}$ using~\citep[Appendix~A]{menon2016linking}.} $\frac{1}{2}$ for both events $y=1$ ("$x$ belongs to $P$") and $y=-1$ ("$x$ belongs to $Q$").
Our analysis is based on the following underlying assumption.
\vspace{.5em}
\begin{assumption}[{\citet{zellinger2023adaptive}}]
    \label{ass:iid_of_classification_data}
    The data
    \begin{align}
    \z:=(x_i,1)_{i=1}^m\cup(x_i',-1)_{i=1}^n \in \X\times\Y   
    \end{align}
    is an i.i.d.~sample of $\rho$.
\end{assumption}
Assumption~\ref{ass:iid_of_classification_data} allows us to introduce a binary classification problem for the accessible data $\z$ as follows.
For a fixed \textit{loss} function $\ell:\{-1,1\}\times\mathbb{R}\to\mathbb{R}$, the goal of binary classification is to find, based on the given the data $\z$, a classifier $f:\X\to\mathbb{R}$ with a small \textit{expected risk}
\begin{align}
    \label{eq:expected_risk}
    \mathcal{R}(f):=\mathbb{E}_{(x,y)\sim\rho}[\ell(y,f(x))]=\int_{\X\times\Y} \ell(y,f(x))\diff \rho(x,y).
\end{align}
The loss function $\ell$ is a central object in our study.
In particular, we consider \textit{composite} loss functions $\ell:\{-1,1\}\times\mathbb{R}\to\mathbb{R}$ such that $\ell(y,\wy):=\lambda(y,\Psi^{-1}(\wy))$ is composed of a loss function ${\lambda:\{-1,1\}\times [0,1]\to\mathbb{R}}$ and an invertible \textit{link} function $\Psi:[0,1]\to\mathbb{R}$.
Composite loss functions allow to estimate $\rho(y=1|x)$ by functions $f:\mathcal{X}\to \mathbb{R}$ mapping outside of $[0,1]$~\citep{reid2010composite}.
A central object in the study of composite loss functions is the \textit{conditional risk} $\CR:[0,1]\times \mathbb{R}\to\mathbb{R}$ defined by
$$
\CR(\eta,\wy) := \eta \ell_1(\wy)+(1-\eta)\ell_{-1}(\wy),
$$
where $\ell_1(\wy):=\ell(1,\wy)$ and $\ell_{-1}(\wy):=\ell(-1,\wy)$ denote the \textit{partial losses} of $\ell$.
A composite loss function with invertible link $\Psi:[0,1]\to\mathbb{R}$ is called \textit{proper}, if the conditional risk is minimized at $\wy=\Psi(\eta)$:
\begin{align*}
    \CR(\eta,\Psi(\eta))= \BR(\eta):=\inf_{\wy\in\mathbb{R}} \CR(\eta,\wy)
\end{align*}
and it is \textit{strictly proper}, if the \textit{Bayes risk} $\BR(\eta)$ is achieved at a unique $\eta$.
We denote by $f^\ast:\X\to\Y$ the function defined by $f^\ast(x):=\Psi\circ \rho(y=1|x)$.
Then, the theoretical starting point of our work is the following key lemma for density ratio estimation.
\begin{lemma}[{\citet[Proposition~3]{menon2016linking}}]
    \label{lemma:menon_and_ong}
    Let $\ell:\{-1,1\}\times\mathbb{R}\to \mathbb{R}$ be a strictly proper composite loss function with invertible link function $\Psi:[0,1]\to\mathbb{R}$ and twice differentiable negative Bayes risk $-\BR:[0,1]\to\mathbb{R}$.
    Then
    \begin{align}
        \label{eq:Bregman_representation_menon}
        \mathcal{R}(f)-\mathcal{R}(f^\ast)
        =\frac{1}{2} B_{-\BR^\diamond}(\beta,g\circ f)
    \end{align}
    with
    \begin{align}
        \label{eq:menon_g_construction}
        g(y):=\frac{\Psi^{-1}(y)}{1-\Psi^{-1}(y)}\quad\text{and}\quad
        -\BR^\diamond(z):=-(1+z) \BR\!\left(\frac{z}{1+z}\right).
    \end{align}
\end{lemma}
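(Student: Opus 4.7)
The plan is to unfold the regret pointwise in $x$, use the Savage--Shuford representation of strictly proper losses to rewrite the pointwise regret as a Bregman divergence of $\phi := -\BR$ in the conditional-probability coordinate $\eta = \rho(y=1 \mid x)$, and then transport this divergence from the $\eta$-coordinate to the density-ratio coordinate $\beta = \eta/(1-\eta)$ through an algebraic identity that is exactly what the $\diamond$-transform $\phi \mapsto \phi^\diamond$ encodes.

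First, factoring $\rho(\diff x, \diff y) = \rho(\diff y \mid x)\, \rho_\X(\diff x)$ inside Eq.~\ref{eq:expected_risk} and subtracting $\mathcal{R}(f^\ast)$, one obtains $\mathcal{R}(f) - \mathcal{R}(f^\ast) = \int_\X [\CR(\eta^\ast(x), f(x)) - \BR(\eta^\ast(x))]\, \diff \rho_\X(x)$, with $\eta^\ast(x) := \rho(y = 1 \mid x)$. Because $\rho_\Y$ is Bernoulli$(1/2)$, Bayes' theorem yields $\eta^\ast(x) = \beta(x)/(1+\beta(x))$, so that $1 - \eta^\ast(x) = 1/(1+\beta(x))$ and $\diff \rho_\X = \tfrac{1}{2}(1+\beta)\, \diff Q$. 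Next, writing the composite loss as $\ell(y, \wy) = \lambda(y, \Psi^{-1}(\wy))$ with $\lambda$ strictly proper and setting $q := \Psi^{-1}(\wy)$, the conditional risks of $\ell$ and $\lambda$ at $(\eta, \wy)$ and $(\eta, q)$ coincide, and the classical Shuford--Savage representation of strictly proper losses (applicable here by the assumed twice differentiability of $-\BR$) gives the Bregman form $\CR(\eta, \wy) - \BR(\eta) = D_\phi(\eta, q)$, where $D_\phi(a,b) := \phi(a) - \phi(b) - \phi'(b)(a-b)$.

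The technical crux is an algebraic identity between $D_\phi$ in $\eta$-coordinates and $D_{\phi^\diamond}$ in $\beta$-coordinates. Writing $\beta_i := \eta_i/(1-\eta_i)$, a direct computation from $\phi^\diamond(z) = (1+z)\phi(z/(1+z))$ yields $\phi^\diamond(z) = \phi(\eta)/(1-\eta)$ and $(\phi^\diamond)'(z) = \phi(\eta) + (1-\eta)\phi'(\eta)$ with $\eta = z/(1+z)$. Substituting these, together with $\beta_1 - \beta_2 = (\eta_1 - \eta_2)/[(1-\eta_1)(1-\eta_2)]$, into the definition of $D_{\phi^\diamond}(\beta_1, \beta_2)$ and collecting terms, the cross-terms collapse to the identity $(1-\eta_1)\, D_{\phi^\diamond}(\beta_1, \beta_2) = D_\phi(\eta_1, \eta_2)$.

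Finally, let $\widehat{\eta}(x) := \Psi^{-1}(f(x))$; the construction of $g$ in Eq.~\ref{eq:menon_g_construction} gives $(g \circ f)(x) = \widehat{\eta}(x)/(1 - \widehat{\eta}(x))$, i.e., $(g \circ f)(x)$ is precisely the $\beta$-coordinate of $\widehat{\eta}(x)$. Applying the identity above with $(\eta_1, \eta_2) = (\eta^\ast(x), \widehat{\eta}(x))$ rewrites the integrand as $(1+\beta(x))^{-1}\, D_{\phi^\diamond}(\beta(x), (g \circ f)(x))$, and the factor $(1+\beta(x))^{-1}$ cancels exactly the Jacobian $\tfrac{1}{2}(1+\beta(x))$ in $\diff \rho_\X / \diff Q$, leaving $\mathcal{R}(f) - \mathcal{R}(f^\ast) = \tfrac{1}{2} \int_\X D_{\phi^\diamond}(\beta(x), (g \circ f)(x))\, \diff Q(x) = \tfrac{1}{2} B_{-\BR^\diamond}(\beta, g \circ f)$, which is Eq.~\ref{eq:Bregman_representation_menon}. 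The main obstacle is anticipating the precise form of the identity $(1-\eta_1)\, D_{\phi^\diamond}(\beta_1,\beta_2) = D_\phi(\eta_1,\eta_2)$; once it is written down the derivative formula for $(\phi^\diamond)'$ makes the verification routine, and the outer change of measure from $\rho_\X$ to $Q$ then drops out cleanly.
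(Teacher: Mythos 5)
Your proof is correct, and it follows essentially the same route as the paper: decompose the excess risk pointwise over $\rho_\X$, invoke Savage's representation (Theorem~\ref{thm:savage}) to identify the pointwise regret with $D_{-\BR}\bigl(\eta(x),\Psi^{-1}(f(x))\bigr)$, apply the Radon--Nikod\'ym change of measure $\diff\rho_\X = \tfrac{1}{2}(1+\beta)\,\diff Q$, and then transfer the Bregman divergence from $\eta$- to $\beta$-coordinates via the $\diamond$-identity. The one genuine difference is how you establish that transfer identity $(1-\eta_1)\,D_{\phi^\diamond}(\beta_1,\beta_2)=D_\phi(\eta_1,\eta_2)$: the paper (Lemma~\ref{lemma:Bregman_identity_of_menon}) proves it through the Taylor integral representation $d_\phi(s,s_0)=\int \phi_c(s,s_0)\phi''(c)\,\diff c$ followed by the substitution $z=u/(1+u)$, whereas you verify it by computing $\phi^\diamond$ and $(\phi^\diamond)'$ in closed form and collecting terms directly. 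Your algebraic verification is correct (I checked the cancellation $-\tfrac{(1-\eta_1)\phi(\eta_2)}{1-\eta_2}-\tfrac{\phi(\eta_2)(\eta_1-\eta_2)}{1-\eta_2}=-\phi(\eta_2)$) and is arguably more elementary than the paper's, avoiding the integral form and substitution. One small remark: you attribute the applicability of the Savage representation to the assumed twice differentiability of $-\BR$, but Savage's theorem only needs concavity of $\BR$; twice differentiability is required so that $B_{-\BR^\diamond}$ is a well-defined Bregman divergence in the first place, not to invoke Theorem~\ref{thm:savage}.
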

For convenience of the reader, a proof of Lemma~\ref{lemma:menon_and_ong} is provided in Subsection~\ref{subsec:excess_risk_representation}.
Lemma~\ref{lemma:menon_and_ong} specifies, for a given loss function, the asymptotic error $B_{-\BR^\diamond}(\beta,\widehat{\beta})$ of the model $\widehat{\beta}:=g\circ \widehat{f}$ computed by Algorithm~\ref{alg:dre_by_cpe}.
However, the error $B_{-\BR^\diamond}$ is defined by the loss function $\ell$, while one might be interested in specifying the error measure $B_\phi(\beta,g\circ \widehat{f})$, i.e., specifying $\phi$ and $g$.
The problem of this work is to answer the following question:
\begin{center}
    \textit{Which convex loss functions satisfy $\frac{1}{2} B_\phi(\beta,g\circ f)=\mathcal{R}(f)-\mathcal{R}(f^\ast)$ for given $\phi$ and $g$?}
\end{center}
Using the characterization of such loss functions, we are interested in designing loss functions that lead to error measures $B_\phi$ which put higher weight to larger density ratio values than to smaller values, and, to test their effect in importance weighted parameter selection of deep neural networks.

\section{Characterization of Losses for Density Ratio Estimation}
\label{sec:characterization}

The following Theorem~\ref{thm:loss_characterization}, see Subsection~\ref{subsec:main_characterization_theorem_proof} for a proof, answers our research question in Section~\ref{sec:notation} above for general (not necessarily convex) strictly proper composite loss functions.

\begin{theorem}
    \label{thm:loss_characterization}
    Let $\phi:[0,\infty)\to\mathbb{R}$ be strictly convex and twice differentiable and let $g:\mathbb{R}\to \mathbb{R}$ be strictly monotonically increasing.
    Then, $\ell:\{-1,1\}\times\mathbb{R}\to\mathbb{R}$ is strictly proper composite and satisfies
    \begin{align}
        \label{eq:characterization_loss_representation}
        \mathcal{R}(f)-\mathcal{R}(f^\ast)
        =\frac{1}{2} B_\phi(\beta,g\circ f),\quad\forall P,Q\in\mathcal{M}_1^+(\mathcal{X}): P\ll Q, \forall f\in L^1(Q),
    \end{align}
    if and only if
    \begin{align}
        \ell(y,\widehat{y})=
        \begin{cases}
          \gamma\!\left(\Psi^{-1}(\widehat{y})\right)+\left(1-\Psi^{-1}(\widehat{y})\right) \gamma'\!\left(\Psi^{-1}(\widehat{y})\right) & \text{if}\ y=1 \\
          \gamma\!\left(\Psi^{-1}(\widehat{y})\right)-\Psi^{-1}(\widehat{y}) \gamma'\!\left(\Psi^{-1}(\widehat{y})\right) & \text{if}\ y=-1
          \label{eq:loss_characterization}
        \end{cases}
    \end{align}
    with $\Psi^{-1}(\widehat{y}):=\frac{g(\widehat{y})}{1+g(\widehat{y})}$ and $\gamma(\widehat{\eta}):=-\phi\!\left(\frac{\widehat{\eta}}{1-\widehat{\eta}}\right)(1-\widehat{\eta})+ \widehat{\eta} c_2 + c_1$ for some $c_1,c_2\in\mathbb{R}$.
\end{theorem}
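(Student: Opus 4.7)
The plan is to combine Lemma~\ref{lemma:menon_and_ong} with the Shuford--Savage representation of differentiable proper losses. That representation states that any strictly proper $\lambda:\{-1,1\}\times[0,1]\to\R$ is completely determined by its Bayes risk $\BR$ via $\lambda_1(\eta)=\BR(\eta)+(1-\eta)\BR'(\eta)$ and $\lambda_{-1}(\eta)=\BR(\eta)-\eta\BR'(\eta)$; this follows because stationarity of $\CR(\eta,\wy)=\eta\lambda_1(\wy)+(1-\eta)\lambda_{-1}(\wy)$ at $\wy=\eta$ forces $\lambda_{-1}'(\eta)=-\tfrac{\eta}{1-\eta}\lambda_1'(\eta)$, whence $\lambda_1-\lambda_{-1}=\BR'$. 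With this in hand, the theorem reduces to matching (i) the canonical link $g_\Psi(\wy):=\Psi^{-1}(\wy)/(1-\Psi^{-1}(\wy))$ from Lemma~\ref{lemma:menon_and_ong} with the prescribed $g$, and (ii) the Bregman generator $-\BR^\diamond$ with $\phi$ up to an affine function, which a Bregman divergence cannot see.

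For the sufficient direction, I would rewrite the candidate $\ell$ of Eq.~\ref{eq:loss_characterization} as $\ell(y,\wy)=\lambda(y,\Psi^{-1}(\wy))$ with $\Psi^{-1}(\wy)=g(\wy)/(1+g(\wy))$ and $\lambda$ the Savage loss of Bayes risk $\gamma$. The conditional risk simplifies to $\CR(\eta,\wy)=\gamma(\wy)+\gamma'(\wy)(\eta-\wy)$; since $(1-\eta)\phi(\eta/(1-\eta))$ is the perspective of the strictly convex $\phi$ and hence itself strictly convex, $\gamma$ is strictly concave, so $\wy=\eta$ is the unique minimizer and $\lambda$ is strictly proper with $\BR=\gamma$. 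By construction $g_\Psi=g$, so Lemma~\ref{lemma:menon_and_ong} yields $\mathcal{R}(f)-\mathcal{R}(f^\ast)=\tfrac{1}{2}B_{-\BR^\diamond}(\beta,g\circ f)$. A direct calculation using $\phi^\diamond(z)=(1+z)\phi(z/(1+z))$ and $\eta/(1-\eta)=z$ at $\eta=z/(1+z)$ gives $-\BR^\diamond(z)=\phi(z)-(c_1+c_2)z-c_1$, and invariance of Bregman divergences under affine perturbations of the generator implies $B_{-\BR^\diamond}=B_\phi$.

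For the necessary direction, assume $\ell$ is strictly proper composite with link $\Psi$ satisfying Eq.~\ref{eq:characterization_loss_representation}. By Lemma~\ref{lemma:menon_and_ong}, $B_{-\BR^\diamond}(\beta,g_\Psi\circ f)=B_\phi(\beta,g\circ f)$ for every $P\ll Q$ and every $f\in L^1(Q)$. Specializing to constant $f\equiv c$, expanding both sides, and using $\int\beta\,\mathrm{d}Q=1$, collapses the identity to the statement that $\E_{x\sim Q}[h(\beta(x))]$ depends only on $c$, where $h:=-\BR^\diamond-\phi$; varying $P$ with $Q$ fixed and invoking Jensen's inequality forces $h$ to be affine, so $B_{-\BR^\diamond}=B_\phi$ pointwise. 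The remaining identity $B_\phi(\beta,g_\Psi\circ f)=B_\phi(\beta,g\circ f)$ is then attacked by choosing $f:=g^{-1}\circ\beta$ (valid since $g$ is strictly monotone), which makes the right-hand side vanish; strict convexity of $\phi$ and positive-definiteness of $B_\phi$ then give $g_\Psi=g$. Inverting the perspective transform via $\phi(\eta)=(1-\eta)\phi^\diamond(\eta/(1-\eta))$ recovers $\BR=\gamma$ of the stated form, and the Shuford--Savage representation reconstructs $\ell$ as in Eq.~\ref{eq:loss_characterization}.

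The main obstacle is the step that promotes the integrated identity to the affine equivalence of $-\BR^\diamond$ and $\phi$. The cleanest route uses constant predictors to reduce it to ``$\E_{x\sim Q}[h(\beta(x))]$ is the same for every density $\beta$ with $\int\beta\,\mathrm{d}Q=1$''; Jensen's inequality then forces $h$ to be affine, since any strictly convex or concave perturbation would, for a non-constant $\beta$, violate the required equality. Care is needed to separate this generator-matching step from the link-matching step $g_\Psi=g$, which is why one first establishes affine equivalence (so both sides become $B_\phi$) and only then uses the $\beta=g\circ f$ trick to identify the links; all other steps are mechanical applications of Lemma~\ref{lemma:menon_and_ong} and Savage's formulas.
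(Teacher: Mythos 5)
Your sufficiency argument coincides with the paper's: reconstruct the conditional risk from $\gamma$ via Savage's representation, invoke Lemma~\ref{lemma:menon_and_ong}, and observe that $-\BR^\diamond$ differs from $\phi$ only by an affine term (your computation $-\BR^\diamond(z)=\phi(z)-(c_1+c_2)z-c_1$ is in fact cleaner than the paper's). The necessity direction is where you and the paper take genuinely different routes to the same key step. The paper proves its Lemma~\ref{lemma:characterization} by reducing, via Lemma~\ref{lemma:Bregman_identity_of_menon} and the Reid--Williamson excess-risk representation, to the identity $\E_{x\sim\rho_\mathcal{X}}[\varphi(\rho(y=1|x))]=c$ with $\varphi:=\zeta-\BR$, specializing to $f\equiv 1$, and then \emph{explicitly constructing} piecewise-constant densities $p_r$, $q$ on $[0,1]$ so that the expectation evaluates in closed form and yields the affine relation for $\varphi$ first on $[\tfrac12,1]$, then on $[0,\tfrac12]$. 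You instead reach $\E_Q[h(\beta)]=h(1)$ for all admissible $\beta$ with $\E_Q[\beta]=1$, where $h:=-\BR^\diamond-\phi$, and appeal to Jensen to conclude $h$ is affine. This is the same integral constraint in a different parametrization ($\eta$ versus $\beta$), so the idea is shared; the paper's execution is just more concrete. Your Jensen appeal as stated is, however, a gap: ruling out strictly convex or concave perturbations does not by itself force $h$ to be affine (a non-affine function need not be strictly convex or concave on any subinterval). To close it you must, exactly as the paper does, plug in two-point ratios $pa+(1-p)b=1$ and observe that the secant identity $ph(a)+(1-p)h(b)=h(1)$ forces $(h(a)-h(1))/(a-1)$ to be independent of $a$; this is what the explicit $p_r,q$ accomplish.

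On the other hand, you make explicit a step the paper's necessity proof leaves implicit. Lemma~\ref{lemma:characterization} \emph{defines} $g$ from $\Psi$, and the paper's proof of Theorem~\ref{thm:loss_characterization} writes ``From $\Psi^{-1}(\widehat y)=g(\widehat y)/(1+g(\widehat y))$ we obtain \ldots'', i.e.\ it assumes rather than derives the link--$g$ coupling. Your $f=g^{-1}\circ\beta$ trick (vanishing one side of $B_\phi(\beta,g_\Psi\circ f)=B_\phi(\beta,g\circ f)$ and invoking strict positivity of $B_\phi$) supplies the missing link-matching; you should only note that $g^{-1}\circ\beta$ may fail to be in $L^1(Q)$ for general $g$, so one should restrict to bounded $\beta$ with essential range a subinterval of $(0,\infty)$ and conclude $g_\Psi=g$ by continuity.
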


\begin{remark}[\wz{on Novelty}]
    \wz{
    The novelty of Theorem~\ref{thm:loss_characterization} essentially lies in the fact that any strictly proper composite loss function satisfying Eq.~\ref{eq:characterization_loss_representation} is necessarily of the form of Eq.~\ref{eq:loss_characterization}.
    Our proof has four main components: (a) the inversion of the constructions developed in~\citep[Appendix~B]{menon2016linking} done in Lemma~\ref{lemma:characterization}, (b) the application of~\citep[Corollary~13]{reid2010composite}, (c) Savage's theorem~\citep{savage1971elicitation} and (d) our technical Lemma~\ref{lemma:technical_lemma}.}
\end{remark}
\begin{remark}
    The sufficiency of Eq.~\ref{eq:loss_characterization} can be easily seen from~\cite{menon2016linking,reid2010composite,bao2023proper} as follows:
    In~\citet[Proposition~1]{menon2016linking}
    it is proven that any strictly proper composite loss satisfies Eq.~\ref{eq:characterization_loss_representation} and~\citet[Theorem~7]{reid2010composite} (cf.~\citet{reid2011information,bao2023proper}) proves that the loss in Eq.~\ref{eq:loss_characterization} constructed by any strictly concave $\gamma$ is strictly proper composite.
\end{remark}
Theorem~\ref{thm:loss_characterization} expresses the expected Bregman divergence as an excess risk.
This allows us to estimate the Bregman divergence in a natural way by empirical risk minimization.
However, the minimization of non-convex functions is hard.
We therefore characterize convex loss functions by a corollary of Theorem~\ref{thm:loss_characterization} combined with Theorem~\ref{thm:reid_convex} originally provided by~\citet{reid2010composite}.
\begin{corollary}
    \label{cor:convexity}
    Let $\phi:[0,\infty)\to\mathbb{R}$ be three times differentiable such that $\phi''(x)>0$ for all $x\in[0,\infty)$ and let $g:\mathbb{R}\to \mathbb{R}$ be strictly monotonically increasing.
    Then, the loss $\ell$ in Eq.~\ref{eq:loss_characterization} is convex in its second argument for all $y\in\{-1,1\}$ if and only if
    \begin{align}
    \label{eq:convexity_corollary}
        -\frac{1}{x}\leq
        \frac{\phi'''\!\left(x\right)}{\phi''\!\left(x\right)}
        -
        \frac{\left(g^{-1}\right)''\!\left(x\right)}{\left(g^{-1}\right)'\!\left(x\right)}
        \leq 0,\quad\forall x\in[0,\infty).
    \end{align}
\end{corollary}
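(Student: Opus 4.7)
}
The strategy is to combine the explicit form of $\ell$ given by Theorem~\ref{thm:loss_characterization} with the convexity criterion for strictly proper composite losses in Theorem~\ref{thm:reid_convex} of \citet{reid2010composite}, and then translate the resulting inequalities into the clean form of Eq.~\ref{eq:convexity_corollary} via the variable change $x=\eta/(1-\eta)$.

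First I would observe that by Theorem~\ref{thm:loss_characterization} the conditional risk equals
$\CR(\eta,\widehat{\eta})=\gamma(\widehat{\eta})+(\eta-\widehat{\eta})\gamma'(\widehat{\eta})$,
so the (concave) Bayes risk is $\BR(\eta)=\gamma(\eta)$ and the canonical weight function of the proper loss is $w(\eta)=-\gamma''(\eta)$. Differentiating $\gamma(\eta)=-(1-\eta)\phi(\eta/(1-\eta))+c_2\eta+c_1$ twice with the chain rule gives the compact expression
\begin{align*}
    \gamma''(\eta)=-\frac{\phi''(x)}{(1-\eta)^{3}},\qquad x:=\frac{\eta}{1-\eta}.
\end{align*}
Similarly, $\Psi(\eta)=g^{-1}(\eta/(1-\eta))$ implies, again by the chain rule,
\begin{align*}
    \frac{\Psi''(\eta)}{\Psi'(\eta)}=\frac{(g^{-1})''(x)}{(g^{-1})'(x)}\cdot\frac{1}{(1-\eta)^{2}}+\frac{2}{1-\eta}.
\end{align*}

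Next, Theorem~\ref{thm:reid_convex} states that a strictly proper composite loss with weight $w$ and invertible link $\Psi$ is convex in its second argument if and only if a bilateral condition of the form $-\tfrac{1}{1-\eta}\le \tfrac{w'(\eta)}{w(\eta)}-\tfrac{\Psi''(\eta)}{\Psi'(\eta)}\le \tfrac{1}{\eta}$ holds for all $\eta\in(0,1)$ (equivalently obtained by writing $\ell_1''$ and $\ell_{-1}''$ with $\widehat{\eta}=\Psi^{-1}(\widehat{y})$ and demanding both nonnegative). Computing $w'/w$ in the same way as $\Psi''/\Psi'$ above and subtracting, all $1/(1-\eta)$ terms combine, leaving
\begin{align*}
    \frac{w'(\eta)}{w(\eta)}-\frac{\Psi''(\eta)}{\Psi'(\eta)}
    =\frac{1}{(1-\eta)^{2}}\Bigl[\frac{\phi'''(x)}{\phi''(x)}-\frac{(g^{-1})''(x)}{(g^{-1})'(x)}\Bigr]+\frac{1}{1-\eta}.
\end{align*}
Denote the bracket by $D(x)$. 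Multiplying the Reid bilateral inequality through by $(1-\eta)^2$ and using $x(1-\eta)=\eta$, the upper bound $1/\eta$ becomes $D(x)\le 0$ and the lower bound $-1/(1-\eta)$ becomes $D(x)\ge -1/x$, which is exactly Eq.~\ref{eq:convexity_corollary}. Since each step is an equivalence and the correspondence $\eta\mapsto x$ is a bijection between $(0,1)$ and $[0,\infty)$, the ``if and only if'' follows.

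The main obstacle is bookkeeping: carrying the two nested applications of the chain rule (through $\eta\mapsto x$ in both $\gamma$ and $\Psi$) without sign errors, and verifying that the $1/(1-\eta)$ contributions from $w'/w$ and $\Psi''/\Psi'$ cancel in just the right way so that the $(1-\eta)$-dependent prefactors on the two sides of Reid's inequality combine into the desired $x$-only bounds $-1/x$ and $0$. A small subtlety is the endpoint $x=0$ (i.e.~$\eta=0$), where continuity of $\phi''',\phi'',(g^{-1})',(g^{-1})''$ is needed to extend the inequality from $(0,\infty)$ to $[0,\infty)$; this follows from the differentiability assumptions in the hypothesis.
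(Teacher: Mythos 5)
Your overall plan is the same as the paper's: identify $\BR=\gamma\circ\Psi^{-1}$ and the Shuford weight $w(\weta)=-\gamma''(\weta)$ from Theorem~\ref{thm:loss_characterization}, compute $\gamma''$, $w'/w$, and $\Psi''/\Psi'$ by chain rule, plug into the Reid--Williamson convexity criterion (Theorem~\ref{thm:reid_convex}), and substitute $x=\weta/(1-\weta)$. Your intermediate computations ($\gamma''(\weta)=-\phi''(x)/(1-\weta)^3$, $\Psi''/\Psi'=\tfrac{(g^{-1})''(x)}{(g^{-1})'(x)(1-\weta)^2}+\tfrac{2}{1-\weta}$, and hence $\tfrac{w'}{w}-\tfrac{\Psi''}{\Psi'}=\tfrac{D(x)}{(1-\weta)^2}+\tfrac{1}{1-\weta}$ with $D(x)=\tfrac{\phi'''(x)}{\phi''(x)}-\tfrac{(g^{-1})''(x)}{(g^{-1})'(x)}$) are all correct and match the paper's.

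However, you quoted Theorem~\ref{thm:reid_convex} with the two endpoint bounds swapped: you wrote $-\tfrac{1}{1-\weta}\le \tfrac{w'}{w}-\tfrac{\Psi''}{\Psi'}\le\tfrac{1}{\weta}$, whereas the theorem (and the derivation you sketch via $\ell_1'',\ell_{-1}''\ge 0$, using $\lambda_1'(\weta)=(\weta-1)w(\weta)$ and $\lambda_{-1}'(\weta)=\weta w(\weta)$) gives $-\tfrac{1}{\weta}\le \tfrac{w'}{w}-\tfrac{\Psi''}{\Psi'}\le\tfrac{1}{1-\weta}$. This is not a harmless typo: if you actually push your stated inequality through the algebra with $1-\weta=\tfrac{1}{1+x}$, $\weta=\tfrac{x}{1+x}$, the upper bound $\tfrac{1}{\weta}$ yields $D(x)\le\tfrac{1-x}{x(1+x)}$ and the lower bound $-\tfrac{1}{1-\weta}$ yields $D(x)\ge-\tfrac{2}{1+x}$, neither of which is the claimed $D(x)\le 0$ or $D(x)\ge-\tfrac{1}{x}$. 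With the correct version of Reid's inequality the arithmetic does collapse exactly to $-\tfrac1x\le D(x)\le 0$: the upper bound $\tfrac{1}{1-\weta}$ cancels the $\tfrac{1}{1-\weta}$ term on the left giving $D(x)\le0$, and the lower bound $-\tfrac{1}{\weta}$ gives $D(x)\ge-\tfrac{(1-\weta)^2}{\weta(1-\weta)}=-\tfrac{1-\weta}{\weta}=-\tfrac1x$. So the route is right, but the step where you invoke Reid's criterion and then assert the bounds "become" $[-1/x,0]$ is broken as written and needs the correct orientation of the inequality.
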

Corollary~\ref{cor:convexity} is proven in Subsection~\ref{subsec:novel_result_convex_losses}.
Corollary~\ref{cor:convexity} suggests a \textit{canonical}
\begin{align}
\label{eq:canonical_g}
    g_\mathrm{can}^{-1}(\wy):=\phi'(\wy)   
\end{align}
for constructing strictly convex loss functions by Eq.~\ref{eq:loss_characterization}.

\begin{remark}
    The canonical link $\Psi_\mathrm{can}'(\weta):= w(\weta)=\weta \ell'_1(\Psi(\weta))$ of~\citet{buja2005loss} (i.e., the \textit{matching loss} of~\citet{kivinen1997relative,helmbold1999relative} \wz{as discussed in~\citep[Section~6.1]{reid2010composite}}) and the canonical $g_\mathrm{can}$ do \textit{not} lead to the same loss function.
    To see this, note that the equation $\Psi_\mathrm{can}^{-1}(\wy)=\frac{g(\wy)}{1+g(\wy)}$ in Theorem~\ref{thm:loss_characterization} leads to
    $$
    \left(g^{-1}\right)'\!(z)=\Psi_\mathrm{can}'\!\left(\frac{z}{1+z}\right)=w\!\left(\frac{z}{1+z}\right).
    $$
    \citet{buja2005loss} show that differentiating a second time Eq.~\ref{eq:reid_convexity_proof_helper} gives $w(\weta)=-\BR''(\weta)$ and therefore
    $$
    \left(g^{-1}\right)'\!(z)=-\BR''\!\left(\frac{z}{1+z}\right)=-\gamma''\!\left(\frac{z}{1+z}\right)=\phi''(z)(1+z)^3
    $$
    which satisfies Eq.~\ref{eq:convexity_corollary} but differs from $\left(g^{-1}_\mathrm{can}\right)'=\phi''(z)$ by a factor $(1+z)^3$.
\end{remark}
Thanks to Theorem~\ref{thm:loss_characterization} and Corollary~\ref{cor:convexity}, given a strictly convex Bregman generator $\phi$, we are now able to express the Bregman divergence $B_\phi(\beta,g_\mathrm{can}\circ \widehat{f})$ as excess risk of a \textit{canonical} convex loss function $\ell_\mathrm{can}$ defined by Eq.~\ref{eq:loss_characterization} and $g_\mathrm{can}$.
Instantiating Algorithm~\ref{alg:dre_by_cpe} by $\ell_\mathrm{can}$ and $\Psi^{-1}(\wy)=\frac{g_\mathrm{can}(\wy)}{1+g_\mathrm{can}(\wy)}$, we arrive at minimizing $B_\phi(\beta,\widehat{\beta})$ for $\widehat{\beta}=g_\mathrm{can}\circ \widehat{f}$.

\section{Loss Functions for Estimating Large Density Ratios}
\label{sec:large_density_ratio_estimation}

It has been observed by~\citet{menon2016linking} that no loss function in Example~\ref{ex:example_in_introduction} prioritizes an accurate estimation of large density ratio values over smaller values.
We show this behavior in Subsection~\ref{subsec:weight_representation_of_Bregman_divergence} which uses the fact that the second derivative $\phi''$ of the Bregman generator $\phi$ satisfies the relation
\begin{align}
\label{eq:weight_formula_intro}
    B_\phi(\beta,\widehat{\beta})=
        \mathbb{E}_{x\sim Q}\left[\int_0^\infty \phi''(c) \phi_c(\beta(x),\widehat{\beta}(x))\diff c\right]
\end{align}
for the piecewise linear function $\phi_c$ defined in Eq.~\ref{eq:phi_c_function} that does not depend on $\phi$.
It can be seen that the weight functions $\phi''(c)=\frac{1}{c(1+c)}$ (LR), $\phi''(c)=c^{-1}$ (KLest), $\phi''(c)=c^{-3/2}$ (Exp) and $\phi''(c)=1$ (KuLSIF) of Example~\ref{ex:example_in_introduction} are non-increasing and therefore do not realize a prioritization of estimating large values.
In contrast, we propose the following novel Bregman divergences as error measures for Algorithm~\ref{alg:dre_by_cpe}, which prioritize an accurate estimation of large values, see~Figure~\ref{fig:introduction} for an illustration.

The exponential weight (EW) $\phi''(c):=e^{2 c}$ and $g_\mathrm{can}^{-1}:=\phi'$ realize the novel Bregman divergence
    \begin{align}
    \label{eq:Bregman_divergence_exponential_weight}
    B_\phi(\beta,\widehat{\beta})=
    \frac{1}{4}\mathbb{E}_{x\sim Q}\!\left[ e^{2\beta(x)} + e^{2\widehat{\beta}(x)} (-1 - 2\beta(x) + 2\widehat{\beta}(x)) \right].
\end{align}
\wz{Plugging $\phi$ and $g_\mathrm{can}$ in Eq.~\ref{eq:loss_characterization} gives the loss functions $\ell(1,y)=-y$, $\ell(-1,y)=\frac{1}{2} y (\log(2y)-1)$ and $\widehat{\beta}(x)=\frac{1}{2}\log(2\widehat{f}(x))$, which satisfy Eq.~\ref{eq:characterization_loss_representation} according to Theorem~\ref{thm:loss_characterization}.}

The polynomial weight functions $\phi''(c):=c^{2 k}$ for $k\in\{0,1,2,3,\ldots\}$ and $g_\mathrm{can}^{-1}:=\phi'$ in Eq.~\ref{eq:canonical_g} realize the Bregman divergences
\begin{align}
    \label{eq:Bregman_divergence_polynomial_weight}
    B_\phi(\beta,\widehat{\beta})=
    \frac{1}{(1+k)(2+k)}\mathbb{E}_{x\sim Q}\!\left[ \beta(x)^{2 + k} - (2 + k) \beta(x) \widehat{\beta}(x)^{1 + k} + (1 + k) \widehat{\beta}(x)^{2 + k}\right].
\end{align}    
\wz{Plugging $\phi$ and $g_\mathrm{can}$ in Eq.~\ref{eq:loss_characterization} gives the loss functions $\ell(1,y)=-y$ and ${\ell(-1,y)=\frac{((1+k) y)^{\frac{2+k}{1+k}}}{2+k}}$ and the estimators ${\widehat{\beta}(x)=((1 + k) \widehat{f}(x))^{\frac{1}{1+k}}}$.}
Note that the class of divergences in Eq.~\ref{eq:Bregman_divergence_polynomial_weight} can be interpreted to extend KuLSIF, which realizes Eq.~\ref{eq:Bregman_divergence_polynomial_weight} with $k=0$, to larger exponents.

\section{Empirical Evaluations}
\label{sec:empirical_evaluations}

\subsection{Numerical Illustrations}
\label{subsec:numerical_illustrations}

Before showing the potential of our novel loss functions on real world problems in Subsection~\ref{subsec:real-world-datasets}, we illustrate the behavior of our methods on two experiments: Experiment (a) on the form of kernel estimators for a one-dimensional Gaussian density ratio and Experiment (b) on the behavior of the (analytically computed) density ratio estimators used as importance weights in kernel regression.

\begin{figure}[t]
    \centering
    \includegraphics[width=\linewidth]{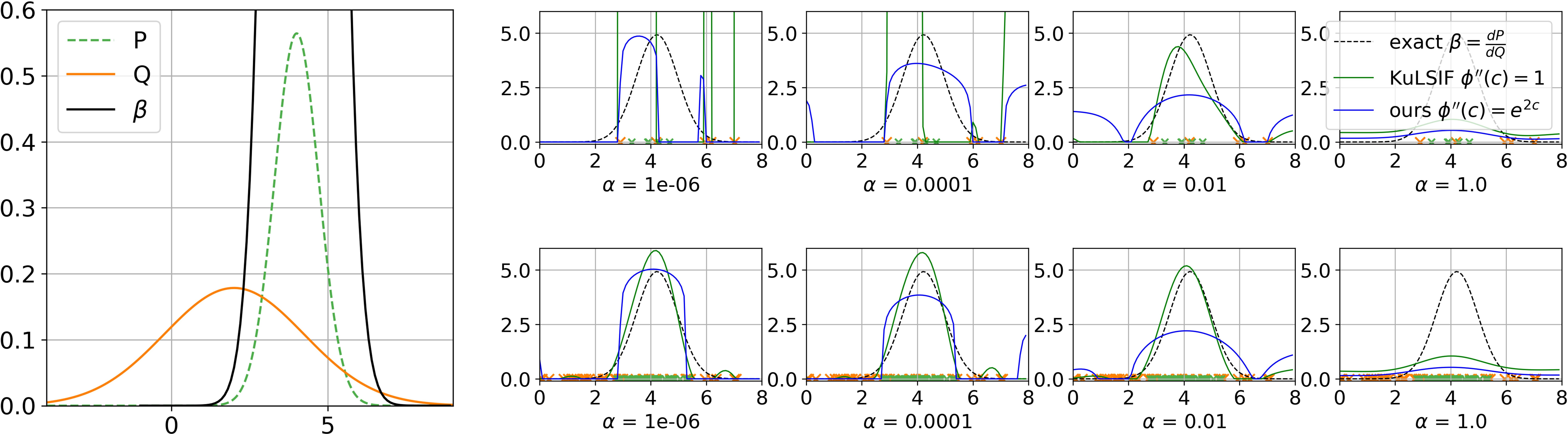}
    \caption{
    Estimation of density ratio (left, black) in Gaussian RKHS with Tikhonov penalty weighted by $\alpha$ for sample sizes $m+n=10$ (top right) and $m+n=100$ (lower right).
    Our loss function (blue) prioritizes accurate estimation of larger values and consequently produces flatter curves then KuLSIF (green).
    }
    \label{fig:gauss_ratio}
\end{figure}

\textbf{Experiment (a):} Figure~\ref{fig:gauss_ratio} shows the behavior of Algorithm~\ref{alg:dre_by_cpe} instantiated with the loss functions for KuLSIF (see Example~\ref{ex:example_in_introduction}) and our novel loss function corresponding to Eq.~\ref{eq:Bregman_divergence_exponential_weight}, for the numerical illustration of~\citet{zellinger2023adaptive}.
The function class $\mathcal{F}$ was fixed to a Gaussian RKHS with bandwidth parameter computed by the median heuristic.
We also added a Tikhonov penalty $+\alpha\lVert f\rVert_\mathcal{H}^2$ to Eq.~\ref{eq:empirical_risk_minimizer} and computed the solution for different values of $\alpha\in\{10^{-6},10^{-4},10^{-2},1\}$ by the Broyden-Fletcher-Goldfarb-Shanno (BFGS) algorithm, see, e.g.,~\citep{nocedal1999numerical}.

The equal weighting of all density ratio values in KuLSIF leads to exploding estimates for small sample size $m+n=10$ with small regularization parameter $\alpha\in\{10^{-6},10^{-4}\}$.
This problem does not appear for our loss function, which comes, however, at the cost of "flatter" estimates due to a higher penalization of errors in larger density ratio values.

\textbf{Experiment (b):} Figure~\ref{fig:toy} extends the setting of Figure~\ref{fig:introduction} to unsupervised domain adaptation under covariate shift~\citep{shimodaira2000improving}:
given iid \textit{source} observations $(x_i')_{i=1}^m$ from $Q$ together with labels $y_i'=f_P(x_i')+\epsilon_i$ corrupted by iid Gaussian noise observations $(\epsilon_i)_{i=1}^n$ and unlabeled iid \textit{target} observations $(x_i)_{i=1}^n$ from $P$, the goal is to estimate the unknown Bayes predictor $f_P(x)=\sin(3 x^4)$.
This is done by importance weighted kernel least squares regression
\begin{align}
    \label{eq:empirical_risk_minimizer_exp_b}
    \widehat{f}:=\argmin_{f\in\mathcal{F}} \frac{1}{m+n}
    \sum_{i=1}^m \widehat{\beta}(x_i') \left\lVert y_i'-f(x_i')\right\rVert_2^2 +\alpha \lVert f\rVert_\mathcal{F}^2
\end{align}
with $\widehat{\beta}$ of Figure~\ref{fig:introduction} for LR and our method with exponential weight $\phi''(c)=e^{2 c}$.
We used a degree-five polynomial RKHS $\mathcal{F}$, a small $\alpha=10^{-32}$ and computed the solution by BFGS optimizing the weights of the solution granted by the representer theorem~\cite{kimeldorf1970correspondence}.

Our weight $\phi''(c)=e^{2 c}$ is increasing, which leads to an error smaller than $0.040$ in squared $L^2(P)$-norm.
This can be explained by the fact that the $L^2(P)$-norm assigns higher weight to regions where the density ratio is large and that these regions are exactly the ones where our weight leads to more accurate estimates.
In contrast, the weight function of LR is decreasing, which leads to an error larger than $0.067$ in squared $L^2(P)$-norm.
The higher performance of our method in squared $L^2(P)$-norm comes at the cost of a higher squared $L^2(Q)$-norm ($0.798$) than for LR ($0.778$).
However, we note that in unsupervised domain adaptation we are interested in the target excess risk on $P$ which is nothing but the squared $L^2(P)$-norm.
The corresponding pointwise errors are depicted in Figure~\ref{fig:toy}.

\begin{figure}[t]
    \centering
    \includegraphics[width=\linewidth]{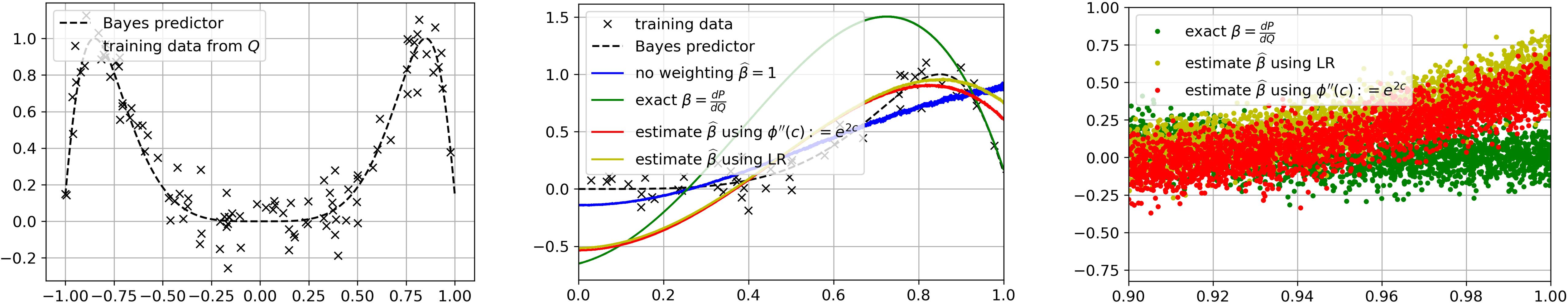}
    \caption{
    Using density ratio estimators from Figure~\ref{fig:introduction} as sample weights for polynomial kernel least squares regression.
    Left: Bayes predictor (black, dashed) and observations ($\times$) from $Q$. Middle: regressors (blue: uniform weighting $\widehat{\beta}\equiv 1$, green: exact $\beta$, red: our estimate with $\phi''(c)=e^{2c}$ from Figure~\ref{fig:introduction}, yellow: LR estimate from Figure~\ref{fig:introduction}). Right: Pointwise errors for target observations from $P$. 
    Exponential weight (ours) is higher for larger density ratio values; consequently achieves smaller errors on $[0.9,1]$.
    }
    \label{fig:toy}
\end{figure}

\subsection{Real-World Datasets}
\label{subsec:real-world-datasets}

In the following, we test the performance of our exponential weighted Bregman divergence in Eq.~\ref{eq:Bregman_divergence_exponential_weight} compared to the performace of LR, KuLSIF and Boost (see Example~\ref{ex:example_in_introduction}) for calculating importance weights for parameter selection methods in unsupervised domain adaptation.
In total, we test the performance on 484 parameter selection tasks; one for each combination of a parameter selection method, a domain adaptation method (from three datasets) and a domain adaptation tasks (originating from three datasets). We trained 9174 neural networks in total, see Section~\ref{sec:appendix_experiments} for details.

\textbf{The unsupervised domain adaptation setting:} given iid \textit{source} observations $(x_i')_{i=1}^m$ from $Q$ together with labels $y_i'=f_P(x_i')+\epsilon_i$ corrupted by \textit{general} iid noise observations $(\epsilon_i)_{i=1}^n$ and unlabeled iid \textit{target} observations $(x_i)_{i=1}^n$ from $P$, the goal is to estimate the \textit{generally unknown} Bayes predictor $f_P:\mathcal{X}\to\mathcal{R}^{d_y}$, $d_y\in\mathbb{N}$.

\textbf{Domain adaptation methods:}
The goal is to select parameters from the methods Adversarial Spectral Kernel Matching (AdvSKM) \citep{Liu2021kernel}, Deep Domain Confusion (DDC) \citep{tzeng2014deep}, Correlation Alignment (DCoral) \citep{Sun2017correlation,nikzad2019domain}, Central Moment Discrepancy (CMD) \citep{zellinger2017central,zellinger2019robust}, Higher-order Moment Matching (HoMM) \citep{Chen2020moment}, Minimum Discrepancy Estimation for Deep Domain Adaptation (MMDA) \citep{Rahman2020}, Deep Subdomain Adaptation (DSAN) \citep{Zhu2021subdomain}, Domain-Adversarial Neural Networks (DANN) \citep{ganin2016domain}, Conditional Adversarial Domain Adaptation (CDAN) \citep{Long2018conditional}, DIRT-T (DIRT) \citep{Shu2018dirt} and Convolutional Deep Domain Adaptation specifically suitable for time-series (CoDATS) \citep{Wilson2020adaptation}.

\textbf{Parameter selection methods:}
We select these parameters by
Importance Weighted (Cross-)Validation (IWV)~\citep{sugiyama2007covariate} and Importance Weighted Aggregation (IWA)~\citep{dinu2022aggregation}.
Both methods, IWV and IWA, first compute several models $f_1,\ldots,f_l:\mathcal{X}\to\mathbb{R}^{d_y}$, one for each parameter setting to try, and second compute the final model $\widehat{f}:\mathcal{X}\to\mathbb{R}^{d_y}$ by Eq.~\ref{eq:empirical_risk_minimizer_exp_b} for $\mathcal{F}=\{f_1,\ldots,f_l\}$ and $\mathcal{F}=\{\sum_{k=1}^l c_k\cdot f_k | c_1,\ldots,c_k\in\mathbb{R}\}$ for IWV and IWA, respectively.
Since Eq.~\ref{eq:empirical_risk_minimizer_exp_b} relies on an accurate estimate of $\beta=\frac{\diff P}{\diff Q}$ also IWV and IWA crucially depend on such an estimate.

\textbf{Datasets and tasks:}
We rely on three domain adaptation datasets with several domains and tasks.
The first dataset is the \textit{AmazonReviews} dataset from~\citet{blitzer2006domain} which consists of text reviews from four domains: books (B), DVDs (D), electronics (E), and kitchen appliances (K).
The reviews are encoded in feature vectors of bag-of-words unigrams and bigrams with binary labels indicating the sentiment of the rankings.
From the four domains we
obtain twelve domain adaptation tasks (B$\rightarrow$D, D$\rightarrow$E,...) where each domain serves once as source domain and once
as target domain.
The second dataset is the \textit{Heterogeneity Human Activity Recognition dataset} (HHAR) from~\citet{stisen2015smart} which investigates
sensor-, device- and workload-specific heterogeneities from 36 smartphones and smartwatches,
consisting of 13 different device models from four manufacturers.
We follow~\cite{ragab2023adatime} and obtain five domain adaptation tasks.
Our third dataset is the \textit{MiniDomainNet} dataset from~\citet{zellinger2021balancing} which is a reduced version of the large scale \textit{DomainNet-2019} from~\citet{peng2019moment} consisting of six
different image domains (Quickdraw, Real, Clipart, Sketch, Infograph, and Painting).
MiniDomainNet reduces the number of classes of DomainNet-2019 to the
top-five largest representatives in the training set of each class across all six domains.
MiniDomainNet is a multi-source single target dataset for which we obtain five tasks, one task for each domain chosen as target domain (and all others used as source domains).

\textbf{Density ratio estimation methods:} We estimate the density ratio by Algorithm~\ref{alg:dre_by_cpe} initialized by our loss function corresponding to Eq.~\ref{eq:Bregman_divergence_exponential_weight} and the other loss functions from Example~\ref{ex:example_in_introduction} excluding SQ which did not perform well in our experiments and also not in the one of~\citet{gruber2024overcoming}.
Eq.~\ref{eq:empirical_risk_minimizer} is optimized with the BFGS algorithm (100 iterations) in a Gaussian RKHS with bandwidth chosen by the median heuristic and Tikhonov regularization with parameter $\alpha\in\{10,0.1,10^{-3}\}$ chosen by cross-validation following~\citet{gruber2024overcoming}.

\textbf{Framework and benchmark protocol:} Our framework extends the Pytorch framework\footnote{https://github.com/lugruber/dre\_iter\_reg} of~\citet{gruber2024overcoming}, by our exponential weight method in Eq.~\ref{eq:Bregman_divergence_exponential_weight}. The Pytorch framework of~\citet{gruber2024overcoming} itself extends the one\footnote{https://github.com/Xpitfire/iwa}of~\citet{dinu2022aggregation} which relies on the AdaTime benchmark suite\footnote{https://github.com/emadeldeen24/AdaTime} of~\citet{ragab2023adatime}.
In all our remaining parameters, we exactly follow the statistical setup and data splits of~\cite{dinu2022aggregation} together with the cross-validation based parameter choice for density ratio of~\citet{gruber2024overcoming}, see Section~\ref{sec:appendix_experiments} for details.

\begin{table}[ht]
\resizebox{\linewidth}{!}{%
\begin{tabular}{lc|cccc|cccc|c}
\hline
\hline
\multicolumn{11}{c}{\textbf{AmazonReviews}}\\
\hline
{} & {} & \multicolumn{4}{c|}{Importance Weighted Validation} & \multicolumn{4}{c|}{Importance Weighted Aggregation} & {}\\
\hline
{} &  \multicolumn{1}{c|}{SO} &     Boost &    KuLSIF &        LR &      EW (ours) &     Boost &    KuLSIF &        LR &     EW (ours) &              TB \\
\hline
AdvSKM     &  0.766($\pm$.009) &  0.766($\pm$.010) &  0.767($\pm$.010) &  0.767($\pm$.013) &  \textbf{0.770($\pm$.010)} &  0.766($\pm$.013) &  0.773($\pm$.009) &  0.769($\pm$.012) &  \textbf{0.778($\pm$.010)} &  0.770($\pm$.010) \\
CDAN       &  0.767($\pm$.012) &  \textbf{0.774($\pm$.012)} &  0.773($\pm$.011) &  0.772($\pm$.011) &  0.773($\pm$.009) &  0.768($\pm$.025) &  0.782($\pm$.014) &  0.775($\pm$.019) &  \textbf{0.789($\pm$.011)} &  0.777($\pm$.012) \\
CMD        &  0.767($\pm$.009) &  0.773($\pm$.017) &  0.774($\pm$.013) &  0.774($\pm$.019) &  \textbf{0.779($\pm$.011)} &  0.745($\pm$.046) &  0.782($\pm$.013) &  0.772($\pm$.021) &  \textbf{0.791($\pm$.010)} &  0.785($\pm$.009) \\
CoDATS     &  0.766($\pm$.012) &  \textbf{0.785($\pm$.018)} &  0.783($\pm$.017) &  0.781($\pm$.020) &  0.784($\pm$.017) &  0.777($\pm$.020) &  0.790($\pm$.012) &  0.784($\pm$.018) &  \textbf{0.797($\pm$.010)} &  0.791($\pm$.013) \\
DANN       &  0.767($\pm$.012) &  0.780($\pm$.015) &  0.781($\pm$.017) &  0.777($\pm$.013) &  \textbf{0.785($\pm$.011)} &  0.772($\pm$.024) &  0.789($\pm$.011) &  0.779($\pm$.020) &  \textbf{0.799($\pm$.008)} &  0.798($\pm$.009) \\
DDC        &  0.766($\pm$.011) &  0.769($\pm$.012) &  \textbf{0.770($\pm$.010)} &  0.770($\pm$.011) &  0.769($\pm$.012) &  0.765($\pm$.022) &  0.772($\pm$.014) &  0.769($\pm$.015) &  \textbf{0.779($\pm$.011)} &  0.770($\pm$.011) \\
DIRT       &  0.764($\pm$.013) &  0.775($\pm$.020) &  0.780($\pm$.016) &  0.777($\pm$.017) &  \textbf{0.783($\pm$.017)} &  0.767($\pm$.022) &  0.786($\pm$.010) &  0.774($\pm$.016) &  \textbf{0.794($\pm$.009)} &  0.786($\pm$.015) \\
DSAN       &  0.769($\pm$.012) &  0.778($\pm$.014) &  0.778($\pm$.013) &  0.777($\pm$.016) &  \textbf{0.783($\pm$.013)} &  0.769($\pm$.037) &  0.788($\pm$.014) &  0.782($\pm$.017) &  \textbf{0.798($\pm$.008)} &  0.789($\pm$.013) \\
DCoral &  0.766($\pm$.012) &  0.769($\pm$.012) &  0.770($\pm$.012) &  0.769($\pm$.011) &  \textbf{0.771($\pm$.012)} &  0.768($\pm$.020) &  0.778($\pm$.012) &  0.773($\pm$.014) &  \textbf{0.783($\pm$.010)} &  0.776($\pm$.011) \\
HoMM       &  0.769($\pm$.012) &  0.765($\pm$.010) &  0.766($\pm$.012) &  0.766($\pm$.010) &  \textbf{0.767($\pm$.011)} &  0.762($\pm$.021) &  0.771($\pm$.013) &  0.765($\pm$.015) &  \textbf{0.776($\pm$.009)} &  0.769($\pm$.012) \\
MMDA       &  0.767($\pm$.009) &  0.768($\pm$.014) &  0.772($\pm$.014) &  0.769($\pm$.012) &  \textbf{0.775($\pm$.012)} &  0.769($\pm$.018) &  0.778($\pm$.012) &  0.772($\pm$.018) &  \textbf{0.789($\pm$.008)} &  0.782($\pm$.012) \\
\hline
mean       &  0.767($\pm$.011) &  0.773($\pm$.014) &  0.774($\pm$.013) &  0.773($\pm$.014) &  \textbf{0.776($\pm$.012)} &  0.766($\pm$.024) &  0.781($\pm$.012) &  0.774($\pm$.017) &  \textbf{0.789($\pm$.010)} &  0.781($\pm$.012) \\
\hline
\hline

\multicolumn{11}{c}{~}\\
\multicolumn{11}{c}{~}\\
\hline
\hline
\multicolumn{11}{c}{\textbf{HHAR}}\\
\hline
{} & {} & \multicolumn{4}{c|}{Importance Weighted Validation} & \multicolumn{4}{c|}{Importance Weighted Aggregation} & {}\\
\hline
{} &  \multicolumn{1}{c|}{SO} &     Boost &    KuLSIF &        LR &      EW (ours) &     Boost &    KuLSIF &        LR &     EW (ours) &              TB \\
\hline
AdvSKM     &  0.718($\pm$.044) &  \textbf{0.729($\pm$.028)} &  0.726($\pm$.025) &  \textbf{0.729($\pm$.028)} &  \textbf{0.729($\pm$.028)} &  0.682($\pm$.042) &  0.683($\pm$.053) &  \textbf{0.684($\pm$.034)} &  0.679($\pm$.039) &  0.749($\pm$.023) \\
CDAN       &  0.728($\pm$.042) &  \textbf{0.789($\pm$.035)} &  0.765($\pm$.082) &  0.785($\pm$.033) &  0.781($\pm$.031) &  0.730($\pm$.051) &  0.747($\pm$.051) &  0.739($\pm$.047) &  \textbf{0.755($\pm$.053)} &  0.790($\pm$.025) \\
CMD        &  0.748($\pm$.026) &  0.760($\pm$.040) &  0.757($\pm$.043) &  0.757($\pm$.043) &  \textbf{0.768($\pm$.027)} &  0.714($\pm$.038) &  \textbf{0.728($\pm$.041)} &  0.722($\pm$.040) &  0.723($\pm$.033) &  0.794($\pm$.015) \\
CoDATS     &  0.710($\pm$.063) &  \textbf{0.741($\pm$.052)} &  0.720($\pm$.076) &  \textbf{0.741($\pm$.052)} &  0.739($\pm$.048) &  0.724($\pm$.042) &  0.731($\pm$.039) &  0.721($\pm$.028) &  \textbf{0.732($\pm$.038)} &  0.785($\pm$.027) \\
DANN       &  0.757($\pm$.027) &  0.796($\pm$.030) &  0.793($\pm$.081) &  \textbf{0.798($\pm$.027)} &  \textbf{0.798($\pm$.027)} &  0.752($\pm$.044) &  0.759($\pm$.057) &  0.765($\pm$.037) &  \textbf{0.767($\pm$.040)} &  0.807($\pm$.023) \\
DDC        &  0.716($\pm$.036) &  \textbf{0.713($\pm$.052)} &  0.701($\pm$.046) &  0.705($\pm$.039) &  0.712($\pm$.036) &  \textbf{0.713($\pm$.042)} &  0.702($\pm$.035) &  0.703($\pm$.043) &  0.703($\pm$.040) &  0.729($\pm$.035) \\
DIRT       &  0.728($\pm$.030) &  0.748($\pm$.035) &  \textbf{0.814($\pm$.060)} &  0.748($\pm$.035) &  0.749($\pm$.035) &  0.718($\pm$.082) &  0.730($\pm$.066) &  0.725($\pm$.073) &  \textbf{0.747($\pm$.047)} &  0.820($\pm$.039) \\
DSAN       &  0.721($\pm$.046) &  \textbf{0.727($\pm$.066)} &  0.717($\pm$.068) &  0.725($\pm$.062) &  0.723($\pm$.062) &  0.731($\pm$.030) &  \textbf{0.736($\pm$.037)} &  0.728($\pm$.034) &  0.731($\pm$.033) &  0.826($\pm$.023) \\
DCoral &  0.745($\pm$.039) &  0.747($\pm$.040) &  0.744($\pm$.038) &  \textbf{0.748($\pm$.038)} &  \textbf{0.748($\pm$.038)} &  0.673($\pm$.054) &  0.687($\pm$.035) &  0.682($\pm$.032) &  \textbf{0.687($\pm$.031)} &  0.776($\pm$.038) \\
HoMM       &  0.739($\pm$.057) &  0.720($\pm$.041) &  \textbf{0.730($\pm$.061)} &  0.728($\pm$.037) &  0.728($\pm$.037) &  0.671($\pm$.061) &  0.690($\pm$.048) &  0.681($\pm$.053) &  \textbf{0.703($\pm$.037)} &  0.764($\pm$.020) \\
MMDA       &  0.738($\pm$.053) &  \textbf{0.726($\pm$.040)} &  0.723($\pm$.031) &  0.726($\pm$.044) &  0.722($\pm$.044) &  0.685($\pm$.060) &  0.697($\pm$.039) &  0.690($\pm$.048) &  \textbf{0.710($\pm$.029)} &  0.785($\pm$.046) \\
\hline
mean       &  0.732($\pm$.042) &  0.745($\pm$.042) &  0.744($\pm$.056) &  0.745($\pm$.040) &  \textbf{0.745($\pm$.038)} &  0.709($\pm$.050) &  0.717($\pm$.045) &  0.713($\pm$.043) &  \textbf{0.722($\pm$.038)} &  0.784($\pm$.028) \\
\hline
\hline
\multicolumn{11}{c}{~}\\
\multicolumn{11}{c}{~}\\
\hline
\hline
\multicolumn{11}{c}{\textbf{MiniDomainNet}}\\
\hline
{} & {} & \multicolumn{4}{c|}{Importance Weighted Validation} & \multicolumn{4}{c|}{Importance Weighted Aggregation} & {}\\
\hline
{} &  \multicolumn{1}{c|}{SO} &     Boost &    KuLSIF &        LR &      EW (ours) &     Boost &    KuLSIF &        LR &     EW (ours) &              TB \\
\hline
AdvSKM     &  0.509($\pm$.018) &  0.515($\pm$.014) &  0.514($\pm$.014) &  0.515($\pm$.014) &  \textbf{0.516($\pm$.015)} &  0.512($\pm$.012) &  \textbf{0.513($\pm$.013)} &  0.511($\pm$.015) &  0.512($\pm$.012) &  0.522($\pm$.020) \\
CDAN       &  0.514($\pm$.015) &  0.511($\pm$.013) &  0.513($\pm$.019) &  0.510($\pm$.021) &  \textbf{0.514($\pm$.018)} &  0.519($\pm$.031) &  0.530($\pm$.014) &  \textbf{0.531($\pm$.019)} &  0.530($\pm$.021) &  0.542($\pm$.017) \\
CMD        &  0.509($\pm$.022) &  \textbf{0.525($\pm$.022)} &  0.522($\pm$.023) &  0.522($\pm$.023) &  0.520($\pm$.021) &  0.517($\pm$.025) &  0.516($\pm$.025) &  \textbf{0.519($\pm$.029)} &  0.518($\pm$.031) &  0.533($\pm$.020) \\
CoDATS     &  0.502($\pm$.032) &  0.514($\pm$.023) &  \textbf{0.515($\pm$.020)} &  0.512($\pm$.023) &  \textbf{0.515($\pm$.020)} &  0.527($\pm$.023) &  0.526($\pm$.015) &  0.525($\pm$.017) &  \textbf{0.527($\pm$.016)} &  0.529($\pm$.019) \\
DANN       &  0.496($\pm$.019) &  0.511($\pm$.020) &  0.514($\pm$.018) &  0.512($\pm$.021) &  \textbf{0.515($\pm$.018)} &  0.515($\pm$.013) &  0.516($\pm$.014) &  \textbf{0.516($\pm$.010)} &  0.513($\pm$.014) &  0.532($\pm$.024) \\
DDC        &  0.510($\pm$.021) &  0.513($\pm$.010) &  \textbf{0.514($\pm$.010)} &  0.513($\pm$.010) &  0.512($\pm$.012) &  0.516($\pm$.015) &  0.515($\pm$.018) &  0.515($\pm$.016) &  \textbf{0.518($\pm$.018)} &  0.521($\pm$.029) \\
DIRT       &  0.499($\pm$.026) &  0.491($\pm$.033) &  0.495($\pm$.032) &  \textbf{0.500($\pm$.039)} &  0.484($\pm$.024) &  0.519($\pm$.031) &  \textbf{0.523($\pm$.025)} &  0.519($\pm$.020) &  0.519($\pm$.021) &  0.525($\pm$.042) \\
DSAN       &  0.509($\pm$.022) &  0.504($\pm$.020) &  0.510($\pm$.028) &  0.502($\pm$.020) &  \textbf{0.516($\pm$.013)} &  0.518($\pm$.018) &  0.520($\pm$.014) &  0.522($\pm$.015) &  \textbf{0.524($\pm$.018)} &  0.563($\pm$.024) \\
DCoral &  0.505($\pm$.028) &  \textbf{0.522($\pm$.012)} &  0.522($\pm$.014) &  0.521($\pm$.013) &  0.522($\pm$.015) &  0.520($\pm$.012) &  0.520($\pm$.013) &  \textbf{0.521($\pm$.013)} &  \textbf{0.521($\pm$.013)} &  0.535($\pm$.017) \\
HoMM       &  0.509($\pm$.023) &  0.518($\pm$.021) &  \textbf{0.521($\pm$.025)} &  0.516($\pm$.020) &  0.514($\pm$.019) &  0.521($\pm$.019) &  0.526($\pm$.013) &  0.523($\pm$.016) &  \textbf{0.527($\pm$.014)} &  0.537($\pm$.014) \\
MMDA       &  0.509($\pm$.022) &  \textbf{0.522($\pm$.020)} &  0.518($\pm$.018) &  0.518($\pm$.018) &  0.519($\pm$.020) &  \textbf{0.530($\pm$.012)} &  0.529($\pm$.013) &  0.530($\pm$.016) &  0.528($\pm$.013) &  0.531($\pm$.013) \\
\hline
mean       &  0.507($\pm$.022) &  0.513($\pm$.019) &  \textbf{0.514($\pm$.020)} &  0.513($\pm$.020) &  0.513($\pm$.018) &  0.519($\pm$.019) &  0.521($\pm$.016) &  0.521($\pm$.017) &  \textbf{0.522($\pm$.017)} &  0.534($\pm$.022) \\
\hline
\hline
\end{tabular}
}
\caption{Mean ($\pm$ standard deviation) for three random initializations of model weights averaged over 12 domain adaptation tasks for AmazonReviews and five tasks for HHAR and MiniDomainNet; full tables in Subsections~\ref{subsec:minidomainnet},~\ref{subsec:hhar} and~\ref{subsec:minidomainnet}.}
\label{tab:summary_mini-domain-net}
\end{table}

\subsection{Results of Empirical Evaluations}
\label{subsec:discussion_of_results}

Beyond the obvious advantages of our novel methods illustrated in Figure~\ref{fig:introduction}, Experiment (a) and Experiment (b), we obtain the following results (see Subsections~\ref{subsec:minidomainnet},~\ref{subsec:hhar} and~\ref{subsec:minidomainnet} for the full results):
\vspace{-1.2em}
\begin{itemize}[leftmargin=1em]
\setlength\itemsep{0em}
    \item EW outperforms on average LR, KuLSIF and Boost for IWV and IWA on all three datasets, excluding IWV on MiniDomainNet for which all methods (Boost, KuLSIF, LR and EW) perform similarly.
    \item EW is among the best methods in 56\% of parameter selection problems, i.e., among all cells in all tables in Subsections~\ref{subsec:minidomainnet},~\ref{subsec:hhar} and~\ref{subsec:minidomainnet}; clearly outperforming KuLSIF (23\%), LR (13\%) and Boost (17\%).
\end{itemize}

\section{Conclusion and Future Work}
\label{sec:conclusion_and_future_work}

A large class of methods for density ratio estimation minimizes a Bregman divergence between the true density ratio and an estimator.
However, the Bregman divergence is only implicitly determined and depends on the choice of a binary loss function used to instantiate the algorithm that underlies this class of methods.
In this work, we derived the \textit{unique} formula of a (strictly proper composite convex) loss function for an explicitly given Bregman divergence that we want to minimize.
Our formula allows us to construct novel loss functions with the property of focusing on estimating large density ratio values, which is in contrast to related approaches.
We provide numerical illustrations and applications in deep domain adaptation.
The sample complexity of our methods is, to the best of our knowledge, an open problem.

\subsubsection*{Acknowledgments}
Part of this work was done while Werner Zellinger was affiliated with the Johann Radon Institute for Computational and Applied Mathematics of the Austrian Academy of Sciences.

I thank five anonymous reviewers and an anonymous area chair for detailed discussions which helped to improve the presentation of this paper.
I also thank Lukas Gruber and Marius-Constantin Dinu for helpful discussions on their Pytorch frameworks.

The ELLIS Unit Linz, the LIT AI Lab, the Institute for Machine Learning, are supported by the
Federal State Upper Austria. We thank the projects Medical Cognitive Computing Center (MC3),
INCONTROL-RL (FFG-881064), PRIMAL (FFG-873979), S3AI (FFG-872172), DL for Gran-
ularFlow (FFG-871302), EPILEPSIA (FFG-892171), AIRI FG 9-N (FWF-36284, FWF-36235),
AI4GreenHeatingGrids (FFG- 899943), INTEGRATE (FFG-892418), ELISE (H2020-ICT-2019-3
ID: 951847), Stars4Waters (HORIZON-CL6-2021-CLIMATE-01-01). We thank Audi.JKU Deep
Learning Center, TGW LOGISTICS GROUP GMBH, Silicon Austria Labs (SAL), FILL Gesellschaft
mbH, Anyline GmbH, Google, ZF Friedrichshafen AG, Robert Bosch GmbH, UCB Biopharma SRL,
Merck Healthcare KGaA, Verbund AG, Software Competence Center Hagenberg GmbH, Borealis
AG, TÜV Austria, Frauscher Sensonic, TRUMPF and the NVIDIA Corporation.

\newpage
\bibliography{iclr2025_conference}
\bibliographystyle{iclr2025_conference}

\addcontentsline{toc}{section}{Appendix} 

\newpage
\part{Appendix} 
\parttoc 

\appendix
\section{Auxiliaries}
\label{app:Auxiliaries}

\subsection{Weight Representation of Bregman Divergences}
\label{subsec:weight_representation_of_Bregman_divergence}

The second derivative $\phi''$ of the Bregman generator $\phi$ of Eq.~\ref{eq:Bregman_divergence_introduction} can be seen as a weight of the Bregman divergence as follows.
\begin{lemma}[{\citet{reid2009surrogate}}]
    \label{lemma:weight_representation}
    Let $\phi:\mathbb{R}\to\mathbb{R}$ be strictly convex and twice differentiable.
    Then
    \begin{align}
        \label{eq:weight_representation}
        B_\phi(\beta,\widehat{\beta})=
        \mathbb{E}_{x\sim Q}\left[\int_0^\infty \phi''(c) \phi_c(\beta(x),\widehat{\beta}(x))\diff c\right]
    \end{align}
    with
    \begin{align}
        \label{eq:phi_c_function}
        \phi_c(r,\widehat{r}):=
        \begin{cases}
          (r-c) & \text{if}\ \widehat{r}< c\leq r \\
          (c-r) & \text{if}\ r< c\leq \widehat{r} \\
          0 & \text{otherwise}
    \end{cases}.
    \end{align}
\end{lemma}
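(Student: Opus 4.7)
The plan is to first establish, for each fixed $r,\widehat{r}\in[0,\infty)$, the pointwise identity
\[
\phi(r) - \phi(\widehat{r}) - \phi'(\widehat{r})(r-\widehat{r}) \;=\; \int_0^\infty \phi''(c)\,\phi_c(r,\widehat{r})\,\diff c,
\]
and then integrate both sides against $Q$ and exchange the expectation with the outer integral over $c$ to obtain the formula in the lemma.

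The pointwise step is a double application of the fundamental theorem of calculus followed by Fubini. Using $\phi(r)-\phi(\widehat{r}) = \int_{\widehat{r}}^r \phi'(t)\,\diff t$ (with the convention $\int_a^b = -\int_b^a$) and $\phi'(t) = \phi'(\widehat{r}) + \int_{\widehat{r}}^t \phi''(s)\,\diff s$, the left-hand side rearranges to $\int_{\widehat{r}}^r\!\int_{\widehat{r}}^t \phi''(s)\,\diff s\,\diff t$. In the case $\widehat{r}<r$, the integration region is $\{(s,t):\widehat{r}\leq s\leq t\leq r\}$, so switching the order of integration gives $\int_{\widehat{r}}^r \phi''(s)(r-s)\,\diff s$; since $\phi_c(r,\widehat{r})=r-c$ exactly on $c\in(\widehat{r},r]$ and vanishes elsewhere, this is exactly $\int_0^\infty \phi''(c)\,\phi_c(r,\widehat{r})\,\diff c$. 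The case $r<\widehat{r}$ is symmetric: the same manipulations, with the signs of the outer and inner integrals both flipped, yield $\int_r^{\widehat{r}} \phi''(s)(s-r)\,\diff s$, which matches the second branch of $\phi_c$; the case $r=\widehat{r}$ is trivial since both sides vanish.

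The outer step evaluates this identity at $r=\beta(x)$, $\widehat{r}=\widehat{\beta}(x)$ and takes $\mathbb{E}_{x\sim Q}$. Because $\phi$ is strictly convex we have $\phi''>0$, and $\phi_c\geq 0$ by definition, so the integrand on the right is non-negative and Tonelli's theorem permits exchanging expectation with the $\diff c$-integral without any preparatory integrability check. The only real obstacle is the bookkeeping of the two cases ($\widehat r<r$ versus $r<\widehat r$) when switching the order of the inner integration and verifying that the resulting integrand coincides with the piecewise definition of $\phi_c$; everything else is routine calculus and measure theory.
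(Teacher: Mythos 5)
Your proof is correct and follows essentially the same route as the paper: both establish the integral-remainder form of Taylor's expansion (the paper cites it from Reid and Williamson, you derive it via two applications of the fundamental theorem of calculus and a change of order of integration) and then take expectations. Your additional Tonelli justification for swapping $\mathbb{E}_{x\sim Q}$ with the $c$-integral, using $\phi''>0$ and $\phi_c\geq 0$, is a small rigor improvement over the paper's presentation, which leaves that step implicit.
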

For the convenience of the reader, we provide a proof of Lemma~\ref{lemma:weight_representation} based on Taylor's theorem.
\begin{proof}[Proof of Lemma~\ref{lemma:weight_representation}]
    For any $s_0,s\in[a,b]\subset\mathbb{R}$ and $\phi:[a,b]\to\mathbb{R}$, Taylor's theorem grants us the expansion (cf.~\citet[Corollary~7]{reid2009surrogate})
    \begin{align}
    \label{eq:taylor_expansion}
        \phi(s)=\phi(s_0)+\phi'(s_0) (s-s_0) + \int_{a}^b \phi_c(s,s_0)\phi''(c)\diff c.
    \end{align}
    It follows that
    \begin{align*}
        B_\phi(\beta,\widehat{\beta})&=
        \mathbb{E}_{x\sim Q}\left[ \phi(\beta(x))-\phi(\widehat{\beta}(x))-\phi'(\widehat{\beta}(x)) (\beta(x)-\widehat{\beta}(x))\right]\\
        &=\mathbb{E}_{x\sim Q}\left[\int_{\widehat{\beta}(x)\wedge \beta(x)}^{\widehat{\beta}(x)\vee \beta(x)} \phi_c(\beta(x),\widehat{\beta}(x))\phi''(c)\diff c\right]\\
        &=\mathbb{E}_{x\sim Q}\left[\int_{0}^{\infty} \phi_c(\beta(x),\widehat{\beta}(x))\phi''(c)\diff c\right].
    \end{align*}
\end{proof}

Lemma~\ref{lemma:weight_representation} illustrates a prioritization of large values $r_2\geq r_1$ over smaller ones for increasing weight function $\phi''$:
\begin{align}
    \label{eq:increasing_weight_inequality}
    d_\phi(r_2,r_2-\Delta)\geq 
    d_\phi(r_1,r_1-\Delta) 
\end{align}
for the distance
\begin{align*}
    d_\phi(s,s_0):=\phi(s)-\phi(s_0)-\phi'(s_0) (s-s_0)
\end{align*}
that determines $B_\phi(\beta,\widehat{\beta})=\mathbb{E}_{x\sim Q}[d_\phi(\beta(x),\widehat{\beta}(x))]$.
Eq.~\ref{eq:increasing_weight_inequality} follows from Lemma~\ref{lemma:weight_representation} and the substitution $\varphi(x):=x+r_1-r_2$,
\begin{align*}
    d_\phi(r_1,r_1-\Delta)
    &=\int_{0}^{\infty} \phi_c(r_1,r_1-\Delta)\phi''(c)\diff c\\
    &= \int_{r_1-\Delta}^{r_1} (r_1-c)\phi''(c)\diff c\\
    &= \int_{\varphi(r_2-\Delta)}^{\varphi(r_2)} (r_1-c)\phi''(c)\diff c\\
    &= \int_{r_2-\Delta}^{r_2} \phi''(\varphi(c))(r_1-\varphi(c))\varphi'(c)\diff c\\
    &= \int_{r_2-\Delta}^{r_2} \phi''(c+r_1-r_2)(r_2-c)\diff c\\
    &\leq \int_{r_2-\Delta}^{r_2} \phi''(c)(r_2-c)\diff c\\
    &= d_\phi(r_2,r_2-\Delta),
\end{align*}
where the inequality holds for increasing $\phi''(c-(r_2-r_1))\leq \phi''(c)$.

\subsection{Characterization of Proper Losses}

For convenience of the reader, in the following, we provide self-inclusive proofs of all mathematical statements used to prove our results.

\citet{menon2016linking} discovered a connection\footnote{The relation $\phi''(c)=\frac{1}{(1+c)^3} w\!\left(\frac{c}{1+c}\right)$ is proven in~\cite[Lemma~4]{menon2016linking} with $w$ as defined in our Theorem~\ref{thm:shuford}.}
between $\phi''$ in Lemma~\ref{lemma:weight_representation} and the weight in a fundamental characterization of proper loss functions due to~\citet[Theorem~1]{shuford1966admissible} (cf.~\citet{aczel1967remarks,stael1970assessment}), which we re-state here following~\cite{buja2005loss,reid2010composite}.
\begin{theorem}[{\citet[Theorem~1]{shuford1966admissible}}]
    \label{thm:shuford}
    A loss function $\lambda:\{-1,1\}\times [0,1]\to \mathbb{R}$ with differentiable partial losses $\lambda_1, \lambda_{-1}$ is strictly proper if and only if for all $\eta\in (0,1)$
    \begin{align}
        \label{eq:shuford_thm}
        \frac{\lambda_1'(\eta)}{\eta-1}=\frac{\lambda'_{-1}(\eta)}{\eta}=w(\eta)
    \end{align}
    for some \textit{weight function} $w:(0,1)\to [0,\infty)$ such that $\int_\epsilon^{1-\epsilon} w(c)\diff c<\infty$ for all $\epsilon>0$.
\end{theorem}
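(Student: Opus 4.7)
The plan is to convert strict properness into a first-order statement about the conditional risk $L(\eta,\hat{\eta}) = \eta\ell_1(\hat{\eta}) + (1-\eta)\ell_{-1}(\hat{\eta})$, then read off $w$ from the resulting necessary condition and, separately, extract its non-negativity and local integrability. Since the theorem only assumes first-order differentiability of the partial losses, all the sign information will have to be routed through the FOC itself rather than through a second-derivative test.

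For the forward direction ($\Rightarrow$), strict properness says $\hat{\eta} = \eta$ is the unique minimizer of $L(\eta,\cdot)$ on $[0,1]$ for each $\eta \in (0,1)$. The first-order necessary condition at $\hat{\eta} = \eta$ gives $\eta\ell_1'(\eta) + (1-\eta)\ell_{-1}'(\eta) = 0$. Dividing by $\eta(1-\eta) > 0$ yields $\ell_1'(\eta)/(\eta-1) = \ell_{-1}'(\eta)/\eta$, and I would define $w(\eta)$ as this common value. To get $w(\eta) \geq 0$, I would substitute the FOC $\ell_{-1}'(t) = -\tfrac{t}{1-t}\ell_1'(t)$ (valid at every $t\in(0,1)$) into $\partial_{\hat{\eta}} L(\eta,t)$, obtaining $\ell_1'(t)(\eta-t)/(1-t)$, and then integrate via the fundamental theorem of calculus to get $L(\eta,\hat{\eta}) - L(\eta,\eta) = \int_\eta^{\hat{\eta}} \ell_1'(t)(\eta-t)/(1-t)\,\diff t$. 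Asymptotics as $\hat{\eta} \to \eta$ give the expansion $L(\eta,\eta+\delta) - L(\eta,\eta) \sim -\ell_1'(\eta)\,\delta^2/(2(1-\eta))$, whose positivity under strict properness forces $\ell_1'(\eta) \leq 0$, hence $w(\eta) = \ell_1'(\eta)/(\eta-1) \geq 0$. Local integrability on $[\epsilon, 1-\epsilon]$ then follows because $\ell_1'$ is bounded on this compact interval and $|t-1| \geq \epsilon$ there.

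For the converse ($\Leftarrow$), given $w \geq 0$ with $\int_\epsilon^{1-\epsilon} w < \infty$, I would define $\ell_1'(\eta) := (\eta-1)w(\eta)$ and $\ell_{-1}'(\eta) := \eta\, w(\eta)$ and recover the partial losses by indefinite integration (this is where the integrability hypothesis is used). Both Shuford identities hold by construction, and the FOC $\eta\ell_1'(\eta) + (1-\eta)\ell_{-1}'(\eta) \equiv 0$ makes $\hat{\eta} = \eta$ a critical point. Reusing the integral identity from the forward direction collapses the excess conditional risk to $L(\eta,\hat{\eta}) - L(\eta,\eta) = \int_\eta^{\hat{\eta}} (t-\eta)\,w(t)\,\diff t$, which is non-negative and strictly positive whenever $\hat{\eta} \neq \eta$ under the usual reading that $w$ is essentially positive in every neighborhood, yielding strict properness.

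The main obstacle is the non-negativity step in the forward direction under only first-order differentiability: a direct $\partial_{\hat{\eta}}^2 L \geq 0$ argument is unavailable, so one has to push strict properness through the integrated identity $L(\eta,\hat{\eta}) - L(\eta,\eta) = \int_\eta^{\hat{\eta}} \ell_1'(t)(\eta-t)/(1-t)\,\diff t$ and extract a second-order growth rate as $\hat{\eta} \to \eta$ to conclude the correct sign of $\ell_1'$, and hence of $w$.
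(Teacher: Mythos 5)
Your proof is correct and follows essentially the same route as the paper's: the first-order condition at $\widehat{\eta}=\eta$ gives the equality of the two ratios in the forward direction, and the identity $\frac{\partial}{\partial \widehat{\eta}}\CR(\eta,\widehat{\eta})=(\widehat{\eta}-\eta)\,w(\widehat{\eta})$, integrated to $\int_\eta^{\widehat{\eta}}(t-\eta)w(t)\,\diff t$, drives the converse. If anything you are more careful than the paper on the points it glosses over (non-negativity of $w$, local integrability, and the positivity of $w$ needed for strictness), at the mild implicit cost of using continuity of $\ell_1'$ in your second-order growth estimate.
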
%
\begin{proof}[Proof of Theorem~\ref{thm:shuford}]
    If $\lambda$ is proper, then $\CR(\eta,\eta)=\inf_{\widehat{\eta}\in (0,1)}\CR(\eta,\widehat{\eta})$, which implies that
    $$
    0=\frac{\partial }{\partial \widehat{\eta}} \CR(\eta,\widehat{\eta})\Bigr|_{\widehat{\eta}=\eta}
    =(1-\eta)\lambda_{-1}'(\eta)+\eta \lambda_1'(\eta)
    $$
    and consequently also Eq.~\ref{eq:shuford_thm}.
    
    Eq.~\ref{eq:shuford_thm} gives
    \begin{align}
    \frac{\partial }{\partial \widehat{\eta}} \CR(\eta,\widehat{\eta}) &= (1-\eta)\lambda_{-1}'(\widehat{\eta})+\eta \lambda_1'(\widehat{\eta})\nonumber\\
    &= \eta (\widehat{\eta}-1) w(\widehat{\eta})+(1-\eta) \widehat{\eta} w(\widehat{\eta})\nonumber\\
    &= (\widehat{\eta}-\eta) w(\widehat{\eta}),\label{eq:reid_convexity_proof_helper}
    \end{align}
    which provides us a unique minimum $\CR(\eta,\eta)=\inf_{\widehat{\eta}\in (0,1)}\CR(\eta,\widehat{\eta})$ since $w(\eta)>0$.
\end{proof}%
Another fundamental characterization of proper loss functions is due to~\citet{savage1971elicitation}, which we re-state here in the notation of~\citep[Theorem~4]{reid2010composite}.
\begin{theorem}[{\citet{savage1971elicitation}}]
    \label{thm:savage}
    A loss function $\lambda:\{-1,1\}\times [0,1]\to\mathbb{R}$ is proper if and only if its pointwise Bayes risk $\BR$ is concave and for each $\eta\in[0,1]$ and $\widehat{\eta}\in (0,1)$
    \begin{align}
    \label{eq:savage_theorem_representation}
    \CR(\eta,\widehat{\eta})=\BR(\widehat{\eta})+(\eta-\widehat{\eta}) \BR '(\widehat{\eta}).
    \end{align}
\end{theorem}
The next proof of Theorem~\ref{thm:savage} is given by~\citet{reid2009surrogate}.
\begin{proof}[Proof of Theorem~\ref{thm:savage}]
    To prove the first direction, note that the properness of the loss $\lambda$ means that
    $$
    \CR(\eta,\eta)=\BR(\eta):=\inf_{\widehat{\eta}\in[0,1]}\CR(\eta,\widehat{\eta})
    $$
    for all $\eta\in[0,1]$.
    Expanding the derivative of the pointwise Bayes risk gives
    \begin{align*}
        \BR'(\eta)
        &= \frac{\partial }{\partial \eta} \CR(\eta,\eta)\\
        &= \frac{\partial }{\partial \eta}\left((1-\eta)\lambda(-1,\eta)+\eta \lambda(1,\eta)\right)\\
        &= -\lambda(-1,\eta)+(1-\eta)\lambda'(-1,\eta)+\lambda(1,\eta)+\eta \lambda'(1,\eta)\\
        &= -\lambda(-1,\eta)+\lambda(1,\eta),
    \end{align*}
    where the last equality follows from the fact that
    \begin{align*}
        0=\frac{\partial }{\partial \widehat{\eta}} \CR(\eta,\widehat{\eta})\Bigr|_{\widehat{\eta}=\eta}
        =(1-\eta)\lambda'(-1,\eta)+\eta \lambda'(1,\eta).
    \end{align*}
    Eq.~\ref{eq:savage_theorem_representation} can now be obtained as follows:
    \begin{align*}
        \BR(\widehat{\eta})+(\eta-\widehat{\eta})\BR'(\widehat{\eta})
        &= (1-\widehat{\eta})\lambda(-1,\widehat{\eta})+\widehat{\eta}\lambda(1,\widehat{\eta})+(\eta-\widehat{\eta})(-\lambda(-1,\widehat{\eta})+\lambda(1,\widehat{\eta}))\\
        &=(1-\eta) \lambda(-1,\widehat{\eta})+\eta \lambda(1,\widehat{\eta})
    \end{align*}
    where the last line is just the definition of $\CR(\eta,\widehat{\eta})$. 

    To prove the other direction, assume that $\phi:[0,1]\to\mathbb{R}$ is a concave function and
    \begin{align*}
    \CR(\eta,\widehat{\eta})=\phi(\widehat{\eta})+(\eta-\widehat{\eta}) \phi'(\widehat{\eta}).
    \end{align*}
    Taylor's theorem grants us the expansion (cf.~\citep[Corollary~7]{reid2009surrogate})
    \begin{align*}
        \phi(\eta)=\phi(\widehat{\eta})+\phi'(\widehat{\eta}) (\eta-\widehat{\eta}) + \int_{0}^1 \phi_c(\eta,\widehat{\eta})\phi''(c)\diff c.
    \end{align*}
    for $\phi_c$ as defined in Lemma~\ref{lemma:weight_representation}.
    It follows that
    \begin{align}
        \label{eq:proof_properness_savage}
        \CR(\eta,\widehat{\eta})=\phi(\eta) - \int_{0}^1 \phi_c(\eta,\widehat{\eta})\phi''(c)\diff c \geq \phi(\widehat{\eta})
    \end{align}
    since $\phi$ is concave and $\phi''(\eta)\leq 0$. The integral in Eq.~\ref{eq:proof_properness_savage} is minimized to zero in the case of $\widehat{\eta}=\eta$.
\end{proof}

\subsection{Bregman Divergence Representation of Excess Risk}
\label{subsec:excess_risk_representation}

To prove Lemma~\ref{lemma:menon_and_ong}, we will need the following Bregman identity.
\begin{lemma}[{\citet[Lemma~2]{menon2016linking}}]
    \label{lemma:Bregman_identity_of_menon}
    Let $\phi:[0,1]\to\mathbb{R}$ be twice differentiable strictly convex and denote by $d_\phi(y,\widehat{y}):=\phi(y)-\phi(\widehat{y})-\phi'(\widehat{y})(y-\widehat{y})$.
    Then, for all $x,y\in[0,\infty)$, it holds that
    \begin{align}
        \label{eq:Bregman_identity_of_menon}
        (1+x)\cdot d_\phi\!\left(\frac{x}{1+x},\frac{y}{1+y}\right)= d_{\phi^\diamond}(x,y),
    \end{align}
    where $\phi^\diamond(z):=(1+z)\cdot \phi\!\left(\frac{z}{1+z}\right)$.
\end{lemma}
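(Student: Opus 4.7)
The plan is to prove the identity by direct computation on the right-hand side, since both sides are manifestly functions of $x$ and $y$ with $\phi$ treated as a black box. I introduce the abbreviations $\bar x := x/(1+x)$ and $\bar y := y/(1+y)$, and aim to unfold $d_{\phi^\diamond}(x,y)$ until it takes the form $(1+x)\bigl[\phi(\bar x) - \phi(\bar y) - \phi'(\bar y)(\bar x - \bar y)\bigr]$.

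First I would compute $(\phi^\diamond)'$. Applying the product rule to $(1+z)\phi(z/(1+z))$ together with the chain rule (using $\frac{d}{dz}[z/(1+z)] = 1/(1+z)^2$) yields
$$
(\phi^\diamond)'(z) \;=\; \phi\!\left(\tfrac{z}{1+z}\right) + \tfrac{1}{1+z}\,\phi'\!\left(\tfrac{z}{1+z}\right).
$$
Plugging this into $d_{\phi^\diamond}(x,y) = \phi^\diamond(x) - \phi^\diamond(y) - (\phi^\diamond)'(y)(x-y)$ produces four terms, two of which involve $\phi(\bar y)$: namely $-(1+y)\phi(\bar y)$ coming from $-\phi^\diamond(y)$, and $-\phi(\bar y)(x-y)$ coming from $(\phi^\diamond)'(y)(x-y)$. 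These combine to give the single coefficient $-(1+x)\phi(\bar y)$, so
$$
d_{\phi^\diamond}(x,y) \;=\; (1+x)\bigl[\phi(\bar x) - \phi(\bar y)\bigr] \;-\; \tfrac{x-y}{1+y}\,\phi'(\bar y).
$$

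The remaining step is the elementary identity $\bar x - \bar y = \frac{x - y}{(1+x)(1+y)}$, which follows from putting $\bar x - \bar y$ over the common denominator $(1+x)(1+y)$ and noting $x(1+y) - y(1+x) = x - y$. Rewriting $\frac{x-y}{1+y} = (1+x)(\bar x - \bar y)$ and factoring out $(1+x)$ then gives
$$
d_{\phi^\diamond}(x,y) \;=\; (1+x)\bigl[\phi(\bar x) - \phi(\bar y) - \phi'(\bar y)(\bar x - \bar y)\bigr] \;=\; (1+x)\,d_\phi(\bar x,\bar y),
$$
which is exactly the claim.

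There is no serious obstacle: the argument is purely an exercise in algebraic bookkeeping, and the only two non-mechanical observations are (i) the collapse $(1+y)\phi(\bar y) + \phi(\bar y)(x-y) = (1+x)\phi(\bar y)$, which lets the $\phi(\bar x)-\phi(\bar y)$ pair share the common prefactor $(1+x)$, and (ii) the explicit computation of $\bar x - \bar y$, which is what ultimately aligns the final $\phi'(\bar y)$ term with the desired Bregman expression.
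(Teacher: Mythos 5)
Your proof is correct, but it takes a genuinely different route from the paper. You prove the identity by direct algebra: you differentiate $\phi^\diamond$, expand $d_{\phi^\diamond}(x,y)$, regroup the two $\phi(\bar y)$ terms into the prefactor $(1+x)$, and then use the elementary identity $\bar x - \bar y = \frac{x-y}{(1+x)(1+y)}$ to align the $\phi'(\bar y)$ term. This argument only ever touches $\phi$ and $\phi'$, makes no use of the convexity assumption, and is entirely self-contained. The paper instead invokes the Taylor integral remainder (weight) representation $d_\phi(s,s_0) = \int_0^\infty \phi_c(s,s_0)\phi''(c)\,\mathrm{d}c$ from its Lemma~3, sets $s=\bar x$, $s_0=\bar y$, and performs the substitution $z = u/(1+u)$ inside the integral; this requires the second-derivative relation $(\phi^\diamond)''(z) = \phi''\!\left(\frac{z}{1+z}\right)\frac{1}{(1+z)^3}$, which is precisely the transformation rule for the Bregman weight function under the perspective map. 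Both proofs are valid. Your version is shorter and more elementary (and in fact works under weaker smoothness assumptions), while the paper's version makes the weight-transformation law explicit, which is thematically central to the paper's program of designing losses by prescribing $\phi''$.
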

\begin{proof}[{{Proof of Lemma~\ref{lemma:Bregman_identity_of_menon}~\cite[Section~5.1]{menon2016linking}}}]
    Eq.~\ref{eq:taylor_expansion} gives, for any $s_0,s\in\mathbb{R}$:
    \begin{align*}
        d_\phi(s,s_0)=
        \int_{0}^\infty \phi_c(s,s_0)\phi''(c)\diff c
    \end{align*}
    with $\phi_c(s,s_0)=|s-c|\cdot \llbracket (s-c)(s_0-c)<0\rrbracket$, where $\llbracket \circ \rrbracket$ is $1$ if the statement $\circ$ is true and it is $0$ otherwise.
    For $s_0:=\frac{y}{1+y}$ and $s:=\frac{x}{1+x}$, we obtain
    \begin{align*}
        d_\phi\!\left(\frac{x}{1+x},\frac{y}{1+y}\right)
        =\int^{\frac{x\vee y}{1+x\vee y}}_{\frac{y\wedge x}{1+y\wedge x}} \left|\frac{x}{1+x}-z\right|\cdot \phi''(z)\diff z.
    \end{align*}
    Applying the substitution $z=\frac{u}{1+u}$ with $\diff z=\frac{\diff u}{(1+u)^2}$ gives
    \begin{align*}
        d_\phi\!\left(\frac{x}{1+x},\frac{y}{1+y}\right)
        &=\int_{y\wedge x}^{x\vee y} \left|\frac{x}{1+x}-\frac{u}{1+u}\right|\cdot \phi''\!\left(\frac{u}{1+u}\right)\frac{1}{(1+u)^2}\diff u\\
        &=\int_{y\wedge x}^{x\vee y} \frac{\left|x-u\right|}{(1+x)(1+u)}\cdot \phi''\!\left(\frac{u}{1+u}\right)\frac{1}{(1+u)^2}\diff u\\
        &=\frac{1}{1+x}\int_{y\wedge x}^{x\vee y} \left|x-u\right|\cdot \phi''\!\left(\frac{u}{1+u}\right)\frac{1}{(1+u)^3}\diff u\\
        &=\frac{1}{1+x} d_{\phi^\diamond}(x,y),
    \end{align*}
    where the last line uses
    \begin{align*}
        \left(\phi^\diamond\right)'(z)=\frac{1}{1+x}\phi'\!\left(\frac{z}{1+z}\right)+\phi\!\left(\frac{z}{1+z}\right)~\text{and}~
        \left(\phi^\diamond\right)''(z)=\phi''\!\left(\frac{z}{1+z}\right)\frac{1}{(1+z)^3}.
    \end{align*}
\end{proof}

We are now ready to provide a proof of Lemma~\ref{lemma:menon_and_ong}.
For convenience of the reader, we don't just repeat the arguments of~\citet[Proposition~3]{menon2016linking}, but we also include the required basis arguments of~\citet{reid2009surrogate} and~\citet{reid2010composite}.
\begin{proof}[{{Proof of Lemma~\ref{lemma:menon_and_ong}}}]
    The properness of $\ell$ and Theorem~\ref{thm:savage} imply
    \begin{align}
        \label{eq:Savage_representation}
        \CR(\eta,\widehat{y})=\BR(\Psi^{-1}(\widehat{y}))+(\eta-\Psi^{-1}(\widehat{y}))\BR'(\Psi^{-1}(\widehat{y})).
    \end{align}
    Denoting by $\eta(x):=\rho(y=1|x)$ and following~\cite[Corollary~13]{reid2010composite}, the excess risk can be expressed as an expected Bregman divergence as follows:
    \begin{align}
        \mathcal{R}(f)\,-&\,\mathcal{R}(f^\ast)=\nonumber\\
        &= \mathbb{E}_{(x,y)\sim \rho}\left[ \ell\!\left(y,f(x)\right)\right]-\mathbb{E}_{(x,y)\sim \rho}\left[\ell(y,f^\ast(x))\right]\nonumber\\
        &= \mathbb{E}_{x\sim \rho_{\mathcal{X}}}\left[ \CR\!\left(\eta(x),f(x)\right)\right]-\mathbb{E}_{x\sim \rho_{\mathcal{X}}}\left[\BR(\eta(x))\right]\nonumber\\
        &= \mathbb{E}_{x\sim \rho_{\mathcal{X}}}\left[
        \BR\!\left(\Psi^{-1}(f(x))\right)+(\eta(x)-\Psi^{-1}(f(x)))\BR'\!\left(\Psi^{-1}(f(x))\right)-\BR(\eta(x))\right]\nonumber\\
        &= \mathbb{E}_{x\sim \rho_{\mathcal{X}}}\left[d_{-\BR}\!\left(\eta(x), \Psi^{-1}(f(x))\right)\right]\label{eq:proof_ref_randw_cor13}
    \end{align}
    with $d_\phi$ as defined in Lemma~\ref{lemma:Bregman_identity_of_menon}.
    Note that the Bregman generator $-\BR$ is strictly convex, since $\BR$ is concave due to Theorem~\ref{thm:savage} and it is strictly concave due to the uniqueness of the minimum $\BR(\eta)=\min_{\wy\in \mathbb{R}} \CR(\eta,\wy)$. 
    
    Following the proof of~\citet[Proposition~3]{menon2016linking}, in the next step, we apply the Radon-Nikod\'ym theorem to obtain:
    \begin{align*}
        &\mathbb{E}_{x\sim \rho_{\mathcal{X}}}\left[d_{-\BR}\!\left(\eta(x), \Psi^{-1}(f(x))\right)\right]\\
        &~~= \frac{1}{2}\mathbb{E}_{x\sim P}\left[d_{-\BR}\!\left(\eta(x), \Psi^{-1}(f(x))\right)\right]+\frac{1}{2}\mathbb{E}_{x\sim Q}\left[d_{-\BR}\!\left(\eta(x), \Psi^{-1}(f(x))\right)\right]\\
        &~~= \frac{1}{2}\mathbb{E}_{x\sim Q}\left[\beta(x)d_{-\BR}\!\left(\eta(x), \Psi^{-1}(f(x))\right)\right]+\frac{1}{2}\mathbb{E}_{x\sim Q}\left[d_{-\BR}\!\left(\eta(x), \Psi^{-1}(f(x))\right)\right]\\
        &~~= \frac{1}{2}\mathbb{E}_{x\sim Q}\left[(1+\beta(x))d_{-\BR}\!\left(\eta(x), \Psi^{-1}(f(x))\right)\right].
    \end{align*}
    Bayes' theorem gives
    \begin{align*}
        \beta(x)&=\frac{\rho(x|y=1)}{\rho(x|y=-1)}=\frac{\rho(x,y=1)\rho_\mathcal{Y}(y=-1)}{\rho(x,y=-1)\rho_\mathcal{Y}(y=1)}=\frac{\rho(x,y=1)\rho_\mathcal{X}(x)}{\rho(x,y=-1)\rho_\mathcal{X}(x)}\\
        &=\frac{\rho(y=1|x)}{\rho(y=-1|x)}=\frac{\rho(y=1|x)}{1-\rho(y=1|x)}=\frac{\eta(x)}{1-\eta(x)}
    \end{align*}
    which implies $\eta(x)=\frac{\beta(x)}{1+\beta(x)}$~\citep{bickel2009discriminative}. Applying Lemma~\ref{lemma:Bregman_identity_of_menon} yields the desired result.
\end{proof}

\subsection{Characterization of Convex Losses}

A characterization of \textit{convex} proper composite loss functions can be stated as follows.
\begin{theorem}[{\citet[Theorem~24]{reid2010composite}}]
    \label{thm:reid_convex}
    \sloppy
    Consider a strictly proper composite loss function $\ell:\{-1,1\}\times \mathbb{R}\to\mathbb{R}$ with invertible link $\Psi:[0,1]\to\mathbb{R}$ such that $\left(\Psi^{-1}\right)'(z)>0$ for all $z\in\mathbb{R}$ and denote by $w(\weta):=\weta \ell'_1(\Psi(\weta))$.
    Then $\ell_1$ and $\ell_{-1}$ are convex if and only if
    \begin{align}
        \label{eq:reid_convex_characterization}
        -\frac{1}{\weta}\leq \frac{w'(\weta)}{w(\weta)}-\frac{\Psi''(\weta)}{\Psi'(\weta)}\leq\frac{1}{1-\weta},\quad \forall \weta\in (0,1).
    \end{align}
\end{theorem}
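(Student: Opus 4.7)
The plan is to reduce convexity of the two partial losses to a pointwise inequality by directly computing their second derivatives via the chain rule and isolating a common positive factor. First I would translate propriety into derivative identities for the ``unlinked'' partial losses $\lambda_y(\weta) := \ell_y(\Psi(\weta))$. Since $\lambda$ is a strictly proper loss on $[0,1]$ with differentiable partials, Theorem~\ref{thm:shuford} gives
$$\lambda_1'(\weta) = (\weta-1)\, w(\weta), \qquad \lambda_{-1}'(\weta) = \weta\, w(\weta),$$
with the same weight $w$ as in the statement (note $w(\weta) = \weta \ell_1'(\Psi(\weta)) = \lambda_1'(\weta)/(\weta - 1) \cdot (-1)$ agrees) and with $w(\weta) > 0$ on $(0,1)$ by strict propriety.

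Next, using the substitution $\weta = \Psi^{-1}(\wy)$ together with $(\Psi^{-1})'(\wy) = 1/\Psi'(\weta)$ and $(\Psi^{-1})''(\wy) = -\Psi''(\weta)/\Psi'(\weta)^3$, the chain and product rules applied to $\ell_y = \lambda_y \circ \Psi^{-1}$ give
$$\Psi'(\weta)^3\, \ell_y''(\wy) = \lambda_y''(\weta)\,\Psi'(\weta) - \lambda_y'(\weta)\,\Psi''(\weta).$$
Differentiating the Shuford identities once more gives $\lambda_1''(\weta) = w(\weta) + (\weta-1) w'(\weta)$ and $\lambda_{-1}''(\weta) = w(\weta) + \weta w'(\weta)$. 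Substituting and factoring out the positive quantity $w(\weta)\Psi'(\weta)$ yields
$$\Psi'(\weta)^3\, \ell_1''(\wy) = w(\weta)\,\Psi'(\weta) \left[\, 1 + (\weta - 1)\, R(\weta) \,\right],$$
$$\Psi'(\weta)^3\, \ell_{-1}''(\wy) = w(\weta)\,\Psi'(\weta) \left[\, 1 + \weta\, R(\weta) \,\right],$$
where $R(\weta) := w'(\weta)/w(\weta) - \Psi''(\weta)/\Psi'(\weta)$.

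The final step is the sign bookkeeping. Since $\Psi'(\weta) > 0$ (equivalent to the hypothesis $(\Psi^{-1})'(z) > 0$) and $w(\weta) > 0$, convexity of $\ell_1$ on $\mathbb{R}$ is equivalent to the bracket in the first display being nonnegative for every $\weta \in (0,1)$; because $\weta - 1 < 0$, dividing flips the inequality and produces the upper bound $R(\weta) \leq 1/(1 - \weta)$. Convexity of $\ell_{-1}$ is equivalent to the second bracket being nonnegative; here $\weta > 0$, so dividing preserves the direction and yields the lower bound $R(\weta) \geq -1/\weta$. Combining the two one-sided inequalities produces exactly Eq.~\ref{eq:reid_convex_characterization}, and since every step is an equivalence, the converse direction follows by reading the chain backwards.

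The main obstacle I anticipate is precisely this sign tracking: one must treat $\weta - 1$ as negative on $(0,1)$ so that the upper bound arises only for $\ell_1$ while the lower bound, coming from multiplication by the positive factor $\weta$, arises only for $\ell_{-1}$. A secondary care point is the derivation of $(\Psi^{-1})''$, where a single dropped sign would swap which side of the inequality the term $\Psi''/\Psi'$ sits on. Differentiability of $w$, needed for the argument, is implicit in the twice-differentiability of the partial losses that convexity-via-second-derivatives already requires.
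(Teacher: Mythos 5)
Your proof is correct and follows essentially the same route as the paper's: both reduce convexity to nonnegativity of the second derivatives of the partial losses (the paper phrases this as $\frac{\partial^2}{\partial z^2}\CR(\eta,z)\ge 0$ for $\eta\in\{0,1\}$, which is the same condition), invoke Shuford's identities for the underlying proper loss, and perform the same chain-rule bookkeeping in $\Psi'$, $\Psi''$ and $w$, ending with the identical sign analysis that assigns the upper bound to $\ell_1$ and the lower bound to $\ell_{-1}$. The only blemish is your parenthetical attempt to reconcile $w$ with the displayed definition $w(\weta)=\weta\,\ell_1'(\Psi(\weta))$, which is garbled; since that displayed definition is itself notationally inconsistent with the weight actually used in the paper's proof (the Shuford weight of the unlinked loss $\ell_y\circ\Psi$), your choice of $w$ is the intended one and the argument stands.
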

Theorem~\ref{thm:reid_convex} suggests a \textit{canonical link}~\citep{buja2005loss} for convex losses is $\Psi_\mathrm{can}'(\weta):= w(\weta)$, which corresponds to the notion of "matching loss" studied by~\citet{kivinen1997relative,helmbold1999relative}.

The following proof from~\citet{reid2010composite} reads in our notation as follows.
\begin{proof}[Proof of Theorem~\ref{thm:reid_convex}]
    Let us denote by $q(\wy):=\Psi^{-1}(\wy)$.
    Then, the definition of $\CR$ and Thm.~\ref{thm:shuford}, give
    \begin{align}
         \frac{\partial }{\partial z} \CR(\eta,z) \Bigr|_{z=\wy}
         &=(1-\eta)\ell_{-1}'(q(z))q'(z) + \eta \ell_1'(q(z))q'(z)\Bigr|_{z=\wy}\nonumber\\
         &=(1-\eta)w(q(\wy))q(\wy)q'(z) + \eta (q(\wy)-1)w(q(\wy))q'(z)\nonumber\\
         &= (q(\wy)-\eta) w(q(\wy)) q'(\wy).\label{eq:reid_convexity_proof_partial_charact}
    \end{align}
    A necessary and sufficient condition for $\ell_y$ for being convex for $\eta\in\{0,1\}$ is that
    \begin{align*}
        \frac{\partial^2 }{\partial z^2} \CR(\eta,z)\Bigr|_{z=\wy}\geq 0,\quad\forall\wy\in\mathbb{R}.
    \end{align*}
    From Eq.~\ref{eq:reid_convexity_proof_partial_charact}, we obtain
    \begin{align}
        \label{eq:reid_convexity_proof_first_inequality}
        \left[w(q(\wy)) q'(\wy)\right]' \left(q(\wy)-\llbracket \eta =1\rrbracket \right) + w(q(\wy)) q'(\wy) q'(\wy)\geq 0,\quad\forall \wy\in\mathbb{R},
    \end{align}
    where
    $$
    \left[w(q(\wy)) q'(\wy)\right]':=\frac{\partial}{\partial\wy}w(q(\wy)) q'(\wy). 
    $$
    As it is noted in~\citep{reid2010composite}, Eq.~\ref{eq:reid_convexity_proof_first_inequality} can be found in~\citep[Eq.~(39)]{buja2005loss}.
    To proceed further, note that Eq.~\ref{eq:reid_convexity_proof_first_inequality} leads, for $y\in\{-1,1\}$ to the pair of inequalities
    \begin{align}
        \label{eq:reid_convexity_proof_first_pair}
        \begin{split}
            w(q(\wy))(q'(\wy))^2 &\geq -q(\wy) \left[w(q(\wy)) q'(\wy)\right]'\\
            w(q(\wy))(q'(\wy))^2 &\geq (1-q(\wy)) \left[w(q(\wy)) q'(\wy)\right]'
        \end{split}
    \end{align}
    which holds for all $\wy\in\mathbb{R}$.
    Note that Eq.~\ref{eq:reid_convexity_proof_first_pair} holds for $q(\cdot)=0$ and $1-q(\cdot)=0$ because $q'(\cdot)>0$ and $w(\cdot)>0$.
    Without loss of generality, in the following, we restrict our attention to the set $\Psi((0,1))=\{x\in \mathbb{R}\,|\, q(x)\neq 0, 1-q(x)\neq 0\}$.
    From Eq.~\ref{eq:reid_convexity_proof_first_pair} we equivalently have
    \begin{align}
    \label{eq:reid_convexity_proof_first_double_inequality}
    \frac{(q'(\wy))^2}{1-q(\wy)}\geq \frac{\left[w(q(\wy)) q'(\wy)\right]'}{w(q(\wy))}\geq \frac{-(q'(\wy))^2}{q(\wy)},\quad\forall\wy\in\Psi((0,1)),
    \end{align}
    where we have used the fact that $-q(\wy)=-\Psi^{-1}(\wy)$ is always negative, that $q(\wy),1-q(\wy)$ are non-zero for $\wy\in\Psi((0,1))$ and $w(\cdot)>0$ for \textit{strictly} proper loss function $\ell$.
    Using that
    \begin{align*}
        \left[w(q(\cdot)) q'(\cdot)\right]' =
        w'(q(\cdot))q'(\cdot) q'(\cdot) + w(q(\cdot)) q''(\cdot)
    \end{align*}
    we obtain, equivalently to Eq.~\ref{eq:reid_convexity_proof_first_double_inequality},
    \begin{align}
    \label{eq:reid_convexity_proof_double_inequality2}
    \frac{1}{1-q(\wy)}
    \geq \frac{w'(q(\wy))}{w(q(\wy))}+\frac{q''(\wy)}{(q'(\wy))^2}
    \geq \frac{-1}{q(\wy)},\quad\forall\wy\in\Psi((0,1)).
    \end{align}
    By substituting $\weta:=q(\wy)$ so that $\wy=q^{-1}(\weta)=\Psi(\weta)$, we get
    \begin{align}
    \label{eq:reid_convexity_proof_double_inequality3}
    \frac{1}{1-\weta}
    \geq \frac{w'(\weta)}{w(\weta)}+\frac{q''(\Psi(\weta))}{(q'(\Psi(\weta)))^2}
    \geq \frac{-1}{\weta},\quad\forall \weta\in (0,1).
    \end{align}
    Note that $\frac{1}{q'(\Psi(\weta))}=\frac{1}{q'(q^{-1}(\weta))}=(q^{-1})'(\weta)=\Psi'(\weta)$ and Eq.~\ref{eq:reid_convexity_proof_double_inequality3} is equivalent to
    \begin{align}
    \label{eq:reid_convexity_proof_double_inequality4}
    \frac{1}{1-\weta}
    \geq \frac{w'(\weta)}{w(\weta)}+q''(\Psi(\weta)) \left(\Psi'(\weta)\right)^2
    \geq \frac{-1}{\weta},\quad\forall \weta\in (0,1).
    \end{align}
    Observe that $q'(\cdot)=(\psi^{-1})'(\cdot)=\frac{1}{\Psi'(\Psi^{-1}(\cdot))}$ and therefore
    \begin{align*}
        q''(\cdot)=(\Psi^{-1})''(\cdot)=\left(\frac{1}{\Psi'(\Psi^{-1}(\cdot))}\right)'
        = \frac{-1}{\left(\Psi'(\Psi^{-1}(\cdot))\right)^3}\Psi''(\Psi^{-1}(\cdot)),
    \end{align*}
    which implies that
    \begin{align*}
        q''(\Psi(\weta)) \left(\Psi'(\weta)\right)^2
        = \frac{-1}{(\Psi'(\weta))^3}\Psi''(\weta) (\Psi'(\weta))^2
        = -\frac{\Psi''(\weta)}{\Psi'(\weta)}.
    \end{align*}
    Plugging this into Eq.~\ref{eq:reid_convexity_proof_double_inequality4}, which is equivalent to the convexity of $\ell$ in its second argument, completes the proof.
\end{proof}

\section{Proofs of Novel Results}
\label{sec:proofs}

\subsection{Key Lemma on Conditional Bayes Risk}
\label{subsec:our_novel_key_lemma}

Before proving our main theorem, we characterize the conditional Bayes risk $\BR$ of a strictly proper composite loss function $\ell:\{-1,1\}\times\mathbb{R}\to\mathbb{R}$ that satisfies the questioned Bregman representation $\mathcal{R}(f)-\mathcal{R}(f^\ast)=\frac{1}{2} B_\phi(\beta,g\circ f)$.
\wz{We start with the following technical lemma.}

\wz{
    \begin{lemma}
    \label{lemma:technical_lemma}
        Let $\phi:[0,\infty)\to\mathbb{R}$ be convex and twice differentiable and denote by
        \begin{align*}
            \zeta(z):=-\phi\!\left(\frac{z}{1-z}\right)(1-z).
        \end{align*}
        If it holds that
        \begin{align*}
            \mathbb{E}_{x\sim \rho_{\mathcal{X}}}\left[d_{-\BR}\!\left(\rho(y=1|x), \widehat{u}\right)\right]
            &= \mathbb{E}_{x\sim \rho_{\mathcal{X}}}\left[d_{-\zeta}\!\left(\rho(y=1|x), \widehat{u}\right)\right]
        \end{align*}
        for all $\widehat{u}\in[0,1]$ and all $P,Q\in\mathcal{M}_1^+(\mathcal{X})$ with $P\ll Q$,        
        then there exist $c_1,c_2\in\mathbb{R}$ such that
        \begin{align}
            \label{eq:technical_lemma_needed}
            \BR(b)=\zeta(b) + c_2 b + c_1,\quad\forall b\in[0,1].
        \end{align}
    \end{lemma}}

    \begin{proof}
    With $u:=\rho(y=1|x)$ it holds that
    \begin{align*}
        \mathbb{E}_{x\sim \rho_{\mathcal{X}}}\left[d_{-\BR}\!\left(u, \widehat{u}\right)\right]
        &= \mathbb{E}_{x\sim \rho_{\mathcal{X}}}\left[d_{-\zeta}\!\left(u, \widehat{u}\right)\right]\\
        \mathbb{E}_{x\sim \rho_{\mathcal{X}}}\left[
        -\BR(u)+\BR(\widehat{u})+\BR'(\widehat{u})(u-\widehat{u})
        \right]
        &= \mathbb{E}_{x\sim \rho_{\mathcal{X}}}\left[
        -\zeta(u)+\zeta(\widehat{u})+\zeta'(\widehat{u})(u-\widehat{u})
        \right]
    \end{align*}
    for all $P,Q\in\mathcal{M}_1^+(\mathcal{X})$ with $ P\ll Q$ and for all $\widehat{u}\in[0,1]$.
    In particular, for $\widehat{u}:=1$ we have
    \begin{align*}
        \mathbb{E}_{x\sim \rho_{\mathcal{X}}}&\left[
        \zeta(u)-\BR(u)\right]
        =\\
        &=\mathbb{E}_{x\sim \rho_{\mathcal{X}}}\left[
        \zeta(1)-\BR(1)
        \right]+
        \mathbb{E}_{x\sim \rho_{\mathcal{X}}}\left[
        (\zeta'(1)-\BR'(1))(u-1)
        \right]\\
        &=
        \mathbb{E}_{x\sim \rho_{\mathcal{X}}}\left[
        \zeta(1)-\BR(1)
        \right]+
        \mathbb{E}_{x\sim \rho_{\mathcal{X}}}\left[u 
        (\zeta'(1)-\BR'(1))\right]
        + \mathbb{E}_{x\sim \rho_{\mathcal{X}}}\left[-1 
        (\zeta'(1)-\BR'(1))\right]\\
        &=
        \mathbb{E}_{x\sim \rho_{\mathcal{X}}}\left[
        \zeta(1)-\BR(1)
        \right]+
        \mathbb{E}_{x\sim \rho_{\mathcal{X}}}\left[u\right] \left(\zeta'(1)-\BR'(1)\right)-1(\zeta'(1)-\BR'(1)) =c,
    \end{align*}
    where
    $$
    \mathbb{E}_{x\sim \rho_{\mathcal{X}}}[u]
    =\mathbb{E}_{x\sim \rho_{\mathcal{X}}}[\rho(y=1|x)]
    =\frac{1}{2}
    $$
    and $$c:=\mathbb{E}_{x\sim \rho_{\mathcal{X}}}\left[
        \zeta(1)-\BR(1)
        \right]+\frac{1}{2}\left(\zeta'(1)-\BR'(1)\right)-1(\zeta'(1)-\BR'(1))$$ is constant.
    To summarize the steps above, we may state with $\varphi:=\zeta-\BR$ that
    \begin{align}
    \label{eq:expectation_is_const}
        \mathbb{E}_{x\sim\rho_\mathcal{X}}\!\left[\varphi\!\left(\rho(y=1|x)\right)\right]=c
    \end{align}
    for all $P,Q\in\mathcal{M}_1^+(\mathcal{X})$ with $ P\ll Q$.
    
    In the following, we will construct probability measures witnessing the affinity of $\varphi$.
    Let us therefore assume without loss of generality\footnote{We can always choose some arbitrary but fixed inner point $\overline{x}\in\X$ and shift all our constructions by choosing $x-\overline{x}$ instead of $x$.}
    that $0\in\X$ is in the interior of $\X$ such that we can fix some $0<\epsilon\leq 1$ satisfying $[0,\epsilon]^d\subseteq \X$.
    
    We now distinguish between two cases.
    For the first case, let $r\geq 1$ and define two measures $P_r, Q\in\mathcal{M}_{+}^{1}(\X)$ by
    \begin{align*}
        P_r(x) &:=
        \begin{cases}
          \frac{r}{\epsilon}\cdot\delta_{0}(x^{(2)})\cdots\delta_{0}(x^{(d)}) & \text{if}\ x^{(1)}\in[0,\frac{\epsilon}{r}] \\
          0 & \text{otherwise}
    \end{cases}\\
        Q(x) &:=
        \begin{cases}
          \frac{1}{\epsilon}\cdot\delta_{0}(x^{(2)})\cdots\delta_{0}(x^{(d)}) & \text{if}\ x^{(1)}\in[0,\epsilon] \\
          0 & \text{otherwise}
    \end{cases},
    \end{align*}
    where $x^{(i)}$ denotes the $i$-th element of the vector $x\in\X$ and $\delta_z(x)$ is the Dirac measure being equal to $0$ whenever $x\neq z$.
    Both measures $P_r$ and $Q$ are probability measures since
    \begin{align*}
        \int_{\X} \diff P_r(x) = \int_0^\epsilon \cdots\int_0^\epsilon \int_0^{\frac{\epsilon}{r}} \frac{r}{\epsilon}\diff x^{(1)} \diff \delta_{0}(x^{(2)}) \ldots \diff \delta_{0}(x^{(d)}) =1
    \end{align*}
    and $\int_\X \diff Q(x)=1$ analogously.
    It holds that
    \begin{align*}
        \rho(y=1|x) &=\frac{\rho(x|y=1)\rho_\mathcal{Y}(y=1)}{\rho_\mathcal{X}(x)}\\
        &=\frac{P_r(x)}{P_r(x)+Q(x)}\\
        &=
        \begin{cases}
              \frac{r}{r+1} & \text{if}\ x^{(1)}\in[0,\frac{\epsilon}{r}]~\text{and}\ x^{(2)}=\ldots=x^{(d)}=0 \\
              0 & \text{otherwise}
        \end{cases}.
    \end{align*}
    It follows that
    \begin{align*}
        \mathbb{E}_{x\sim \rho_{\mathcal{X}}}\!&\left[\varphi(\rho(y=1|x))\right]=\\
        &=
        \frac{1}{2}\int_\X \varphi(\rho(y=1|x)) \diff P_r(x)
        + \frac{1}{2}\int_\X \varphi(\rho(y=1|x)) \diff Q(x)\\
        &=
        \frac{1}{2}\int_0^\epsilon \cdots\int_0^\epsilon \int_0^{\frac{\epsilon}{r}} \frac{r}{\epsilon} \varphi\!\left(\frac{r}{r+1}\right) \diff x^{(1)} \diff \delta_{0}(x^{(2)}) \ldots \diff \delta_{0}(x^{(d)})\\
        &\phantom{=}\quad\quad+
        \frac{1}{2}\int_0^\epsilon \cdots\int_0^\epsilon \int_0^{\frac{\epsilon}{r}} \frac{1}{\epsilon} \varphi\!\left(\frac{r}{r+1}\right) \diff x^{(1)} \diff \delta_{0}(x^{(2)}) \ldots \diff \delta_{0}(x^{(d)})\\
        &\phantom{=\quad\quad+}\quad\quad+
        \frac{1}{2}\int_0^\epsilon \cdots\int_0^\epsilon \int_{\frac{\epsilon}{r}}^\epsilon \frac{1}{\epsilon} \varphi\!\left(0\right) \diff x^{(1)} \diff \delta_{0}(x^{(2)}) \ldots \diff \delta_{0}(x^{(d)})\\
        &=
        \frac{1}{2}\varphi\!\left(\frac{r}{r+1}\right) +
        \frac{1}{2 r}\varphi\!\left(\frac{r}{r+1}\right)+ \frac{1}{2} \varphi(0) \left(1-\frac{1}{r}\right)\\
        &= \varphi\!\left(b\right) \frac{1}{2 b}
        + \varphi(0) \frac{2 b-1}{2 b}
    \end{align*}
    for $b:=\frac{r}{r+1}\in[\frac{1}{2},1]$ with $b=1$ in the limit. As a consequence, we get from Eq.~\ref{eq:expectation_is_const}
    \begin{align}
        \label{eq:proof_greater_one_half}
        f(b)=b(2 c-2\varphi(0))+\varphi(0)
    \end{align}
    for all $b\in[\frac{1}{2},1]$.

    For the second case, let $0\leq r\leq 1$ and define the probability measures
    \begin{align*}
        P_r(x) &:=
        \begin{cases}
          \frac{r}{\epsilon}\cdot\delta_{0}(x^{(2)})\cdots\delta_{0}(x^{(d)}) & \text{if}\ x^{(1)}\in[0,\frac{\epsilon}{2}] \\
          \frac{2-r}{\epsilon} \cdot\delta_{0}(x^{(2)})\cdots\delta_{0}(x^{(d)}) & \text{if}\ x^{(1)}\in[\frac{\epsilon}{2},\epsilon] \\
          0 & \text{otherwise}
    \end{cases}\\
        Q(x) &:=
        \begin{cases}
        \frac{2-r}{\epsilon} \cdot\delta_{0}(x^{(2)})\cdots\delta_{0}(x^{(d)}) & \text{if}\ x^{(1)}\in[0,\frac{\epsilon}{2}] \\
          \frac{r}{\epsilon}\cdot\delta_{0}(x^{(2)})\cdots\delta_{0}(x^{(d)}) & \text{if}\ x^{(1)}\in[\frac{\epsilon}{2},\epsilon] \\
          0 & \text{otherwise}
    \end{cases}.
    \end{align*}
    It holds that
    $$
    \rho(y=1|x)=
    \begin{cases}
          \frac{r}{2} & \text{if}\ x^{(1)}\in[0,\frac{\epsilon}{2}]~\text{and}\ x^{(2)}=\ldots=x^{(d)}=0 \\
          1-\frac{r}{2} & \text{if}\ x^{(1)}\in[\frac{\epsilon}{2},\epsilon]~\text{and}\ x^{(2)}=\ldots=x^{(d)}=0 \\
          0 & \text{otherwise}
    \end{cases}
    $$
    which gives
    \begin{align}
        \mathbb{E}_{x\sim \rho_{\mathcal{X}}}\!&\left[\varphi(\rho(y=1|x))\right]=\nonumber\\
        &= \frac{r}{4} \varphi\!\left(\frac{r}{2}\right)
        + \frac{2-r}{4} \varphi\!\left(1-\frac{r}{2}\right)
        + \frac{2-r}{4} \varphi\!\left(\frac{r}{2}\right)
        + \frac{r}{2} \varphi\!\left(1-\frac{r}{2}\right)\nonumber\\
        &= \frac{1}{2} \varphi\!\left(\frac{r}{2}\right)
        + \frac{1}{2} \varphi\!\left(1-\frac{r}{2}\right).\label{eq:proof_helper}
    \end{align}
    Applying Eq.~\ref{eq:proof_greater_one_half} with $\widetilde{b}:=1-\frac{r}{2}\in[\frac{1}{2},1]$ gives
    \begin{align*}
        \varphi(\widetilde{b})=\varphi\!\left(1-\frac{r}{2}\right)
        = \left(1-\frac{r}{2}\right) (2 c-2\varphi(0))+\varphi(0)
    \end{align*}
    which yields, together with Eq.~\ref{eq:proof_helper},
    \begin{align*}
        c&=\frac{1}{2} \varphi\!\left(\frac{r}{2}\right)
        + \frac{1}{2}\left(1-\frac{r}{2}\right) (2 c-2\varphi(0))+\frac{1}{2}\varphi(0)\\
        -\varphi\!\left(\frac{r}{2}\right) &= (2 c-2\varphi(0)) \left(1-\frac{r}{2}\right) + 2\varphi(0)-2c - \varphi(0)\\
        \varphi\!\left(\frac{r}{2}\right) &= (2 c-2\varphi(0)) \left(\frac{r}{2}-1\right) + (2c- 2\varphi(0)) + \varphi(0)\\
        \varphi\!\left(\frac{r}{2}\right) &= \frac{r}{2} (2 c-2\varphi(0)) + \varphi(0).
    \end{align*}
    Choosing $b:=\frac{r}{2}\in [0,\frac{1}{2}]$ proves, as in the first case, that $\varphi$ is an affine function:
    \begin{align}
    \label{eq:proof_helper2}
        \varphi\!\left(b\right)
        = b (2 c-2\varphi(0))+\varphi(0).
    \end{align}
    The statement of the lemma follows for $c_1:=\varphi(0)$ and $c_2:=(2 c-2\varphi(0))$.
\end{proof}

\wz{With our technical result Lemma~\ref{lemma:technical_lemma} above, we are now able to characterize the conditional Bayes risk of loss functions satisfy the questioned Bregman representation $\mathcal{R}(f)-\mathcal{R}(f^\ast)=\frac{1}{2} B_\phi(\beta,g\circ f)$.}
\begin{lemma}
    \label{lemma:characterization}
    Let $\ell:\{-1,1\}\times\mathbb{R}\to\mathbb{R}$ be strictly proper composite with invertible link function $\Psi:[0,1]\to\mathbb{R}$ and denote by $g(\wy):=\frac{\Psi^{-1}(\wy)}{1-\Psi^{-1}(\wy)}$.
    If there is a strictly convex twice differentiable $\phi:[0,\infty)\to\mathbb{R}$ such that 
    \begin{align}
        \label{eq:requirement_lemma_conditional_bayes_risk}
        \mathcal{R}(f)-\mathcal{R}(f^\ast)
        =\frac{1}{2} B_\phi(\beta,g\circ f),\quad\forall P,Q\in\mathcal{M}_1^+(\mathcal{X}): P\ll Q, \forall f\in L^1(Q),
    \end{align}
    then
    \begin{align}
        \label{eq:bayes_risk_characterization}
        \BR\!\left(\wy\right) = -\phi\!\left(\frac{\Psi^{-1}(\wy)}{1-\Psi^{-1}(\wy)}\right)(1-\Psi^{-1}(\wy)) + c_2 \Psi^{-1}(\wy) + c_1,\quad\forall \wy\in \mathbb{R},
    \end{align}
    for some $c_1,c_2\in\mathbb{R}$.
\end{lemma}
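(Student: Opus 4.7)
The plan is to apply Lemma~\ref{lemma:menon_and_ong} to obtain a second Bregman representation of the excess risk, equate it with the hypothesized one to get an equality of Bregman divergences, promote this to a pointwise identity of generators (up to an affine term), and then invert the $\diamond$-transform to extract $\BR$.

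First I would invoke Lemma~\ref{lemma:menon_and_ong}, which yields for every admissible $P,Q,f$
\begin{align*}
\mathcal{R}(f)-\mathcal{R}(f^\ast)=\tfrac{1}{2}\,B_{-\BR^\diamond}(\beta,g\circ f).
\end{align*}
Combined with the hypothesis Eq.~\ref{eq:requirement_lemma_conditional_bayes_risk}, this gives $B_{-\BR^\diamond}(\beta,g\circ f)=B_\phi(\beta,g\circ f)$ for every admissible triple $(P,Q,f)$.

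Next I would promote this to a pointwise identity on the Bregman integrands. Fix two points $x_0,x_1\in\mathcal{X}$ and take $P=\alpha_P\delta_{x_0}+(1-\alpha_P)\delta_{x_1}$, $Q=\alpha_Q\delta_{x_0}+(1-\alpha_Q)\delta_{x_1}$ with $\alpha_P,\alpha_Q\in(0,1)$ (so $P\ll Q$); then $\beta(x_0)=\alpha_P/\alpha_Q$ sweeps $(0,\infty)$ as $\alpha_P,\alpha_Q$ vary. Since $g=\Psi^{-1}/(1-\Psi^{-1})\colon\mathbb{R}\to(0,\infty)$ is a bijection, varying $f(x_0)$ alone moves $g(f(x_0))=:b$ over $(0,\infty)$ while leaving the $x_1$-contribution fixed. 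The equality of $Q$-expectations therefore forces $d_{-\BR^\diamond}(a,b)-d_\phi(a,b)$ to be independent of $b$; evaluating at $b=a$, where both integrands vanish, shows the constant is zero, so
\begin{align*}
d_{-\BR^\diamond}(a,b)=d_\phi(a,b),\qquad a,b\in(0,\infty),
\end{align*}
where $d_\psi(a,b):=\psi(a)-\psi(b)-\psi'(b)(a-b)$. Differentiating twice in $a$ then yields $-\BR^\diamond(z)=\phi(z)+c_2 z+c_1$ for some $c_1,c_2\in\mathbb{R}$.

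Finally I would invert the $\diamond$-transform. With $\eta:=z/(1+z)$, so $1+z=1/(1-\eta)$ and $z=\eta/(1-\eta)$, the definition $\phi^\diamond(z)=(1+z)\phi(z/(1+z))$ rewrites as $\phi(\eta)=(1-\eta)\phi^\diamond(\eta/(1-\eta))$, whence
\begin{align*}
-\BR(\eta)=(1-\eta)\phi\!\left(\tfrac{\eta}{1-\eta}\right)+c_2\eta+c_1(1-\eta).
\end{align*}
Substituting $\eta=\Psi^{-1}(\wy)$ and relabeling constants reproduces Eq.~\ref{eq:bayes_risk_characterization}.

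I expect the main obstacle to be the pointwise-reduction step: the two-atom construction must be set up so that the expectation identity isolates the dependence on a single free parameter, and one has to argue (via $d_\psi(a,a)=0$) that the residual $b$-constant is in fact zero. A minor subtlety is that Lemma~\ref{lemma:menon_and_ong} nominally requires $-\BR$ twice differentiable; this follows \emph{a posteriori} from the derived affine relation and the twice differentiability of $\phi$, so the argument can be arranged self-consistently.
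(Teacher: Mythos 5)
Your argument follows the paper's overall decomposition but differs in two ways, one of which introduces a gap.

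\textbf{Gap: the invocation of Lemma~\ref{lemma:menon_and_ong} is circular.} That lemma requires the negative Bayes risk $-\BR$ to be twice differentiable, which is not among the hypotheses of Lemma~\ref{lemma:characterization}. Your remark that this hypothesis ``follows \emph{a posteriori} from the derived affine relation'' does not resolve the issue: the affine relation is obtained \emph{by} applying Lemma~\ref{lemma:menon_and_ong}, so you cannot use its conclusion to license that application. The paper avoids this by never invoking Lemma~\ref{lemma:menon_and_ong} in the proof: it instead applies Lemma~\ref{lemma:Bregman_identity_of_menon} with generator $\widetilde\phi^{\diamond}:=\phi$ (whose twice differentiability is inherited from the assumed $\phi$, not from $\BR$) to rewrite $B_\phi(\beta,g\circ f)$ as $\mathbb{E}_{\rho_{\mathcal{X}}}[d_{-\zeta}(u,\widehat u)]$, and pairs this with the Reid--Williamson identity Eq.~\ref{eq:proof_ref_randw_cor13}, which uses only $\BR'$ via Savage's representation. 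Your proof would be repaired by assembling the identity $\mathbb{E}_{\rho_{\mathcal{X}}}[d_{-\BR}(u,\widehat u)]=\mathbb{E}_{\rho_{\mathcal{X}}}[d_{-\zeta}(u,\widehat u)]$ from those two ingredients and then running your two-atom reduction on this $[0,1]$-space identity instead.

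\textbf{Your pointwise-reduction step is a genuine and clean variant.} After reaching the expectation-level identity, the paper fixes $f\equiv 1$ and constructs piecewise-constant densities $p_r,q$ on $[0,1]$ (so it tacitly assumes $\mathcal{X}$ contains, up to scaling, an interval) to force $\varphi:=\zeta-\BR$ to be affine. You instead take two-atom $P,Q$ supported on an arbitrary pair $x_0,x_1\in\mathcal{X}$, note that $b_0=g(f(x_0))$ sweeps $(0,\infty)$ independently of the $x_1$-contribution, deduce that $d_{-\BR^\diamond}(a_0,\cdot)-d_\phi(a_0,\cdot)$ is constant, and kill the constant at $b_0=a_0$. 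This yields pointwise equality of Bregman divergences directly, hence generators agreeing up to an affine term, and the $\diamond$-inversion $\psi(\eta)=(1-\eta)\psi^{\diamond}\!\left(\eta/(1-\eta)\right)$ transports the result to $\BR$. Your reduction needs only that $\mathcal{X}$ has two points and does not single out a specific $f$, so it is arguably tighter than the paper's construction. (Also note the ``differentiate twice in $a$'' step is unnecessary: fixing $b$ and rearranging $d_{-\BR^\diamond}(a,b)=d_\phi(a,b)$ already exhibits $-\BR^\diamond-\phi$ as affine, which sidesteps any need for $(-\BR^\diamond)''$.)
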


\wz{The following proof of Lemma~\ref{lemma:characterization} has three parts. The first part inverts the constructions developed in~\citep[Appendix~B]{menon2016linking}, the second part is to apply~\citep[Corollary~13]{reid2010composite}, and, the third part is to apply Lemma~\ref{lemma:technical_lemma}.} 
\begin{proof}
    From Eq.~\ref{eq:requirement_lemma_conditional_bayes_risk} and Bayes' theorem, we know that
    \begin{align}
        2(\mathcal{R}(f) -\mathcal{R}(f^\ast))
        =B_\phi\!\left(\beta,g\circ f\right)\nonumber= \mathbb{E}_{x\sim Q}\left[d_\phi\!\left(\frac{\Psi^{-1}( f^\ast(x))}{1-\Psi^{-1}( f^\ast(x))}, \frac{\Psi^{-1}( f(x))}{1-\Psi^{-1}(f(x))}\right)\right]\nonumber.
    \end{align}
    Applying Lemma~\ref{lemma:Bregman_identity_of_menon} with $\widetilde{\phi}^\diamond:=\phi$, $\widetilde{x}:=\frac{\Psi^{-1}( f^\ast(x))}{1-\Psi^{-1}( f^\ast(x))}$ and $\widetilde{y}:=\frac{\Psi^{-1}( f(x))}{1-\Psi^{-1}(f(x))}$, such that
    $\frac{\widetilde{x}}{1+\widetilde{x}}=\Psi^{-1}( f^\ast(x))$ and $\frac{\widetilde{y}}{1+\widetilde{y}}=\Psi^{-1}( f(x))$, gives
    \begin{align*}
        d_\phi\!\left(\frac{\Psi^{-1}( f^\ast(x))}{1-\Psi^{-1}( f^\ast(x))}, \frac{\Psi^{-1}( f(x))}{1-\Psi^{-1}(f(x))}\right)
        = (1+\beta(x)) d_{-\zeta}\!\left(\Psi^{-1}( f^\ast(x)), \Psi^{-1}( f(x))\right)
    \end{align*}
    for
    \begin{align*}
        \zeta(z):=-\phi\!\left(\frac{z}{1-z}\right)(1-z),
    \end{align*}
    where we have substituted $z=\frac{u}{1-u}$ in the definition of $\widetilde{\phi}^\diamond(z)$ in Lemma~\ref{lemma:Bregman_identity_of_menon} to obtain
    $$\widetilde{\phi}(u)=(1-u)\widetilde{\phi}^\diamond\!\left(\frac{u}{1-u}\right)=(1-u)\phi\!\left(\frac{u}{1-u}\right)=-\zeta(u),$$ which is twice differentiable and strictly convex as $\phi$ is twice differentiable and strictly convex.
    Denoting by $u:=\Psi^{-1}(f^\ast(x))$ and $\widehat{u}:=\Psi^{-1}(f(x))$ and
    summarizing the steps above, we have
    \begin{align}
        2(\mathcal{R}(f) &-\mathcal{R}(f^\ast))
        = \mathbb{E}_{x\sim Q}\left[(1+\beta(x)) d_{-\zeta}\!\left(u, \widehat{u}\right)\right].\nonumber
    \end{align}
    The transformation used in the proof of~\citep[Proposition~3]{menon2016linking} is the following:
    \begin{align}
        \mathbb{E}_{x\sim Q}\left[(1+\beta(x)) d_{-\zeta}\!\left(u, \widehat{u}\right)\right]&= \frac{1}{2}\mathbb{E}_{x\sim Q}\left[\beta(x)d_{-\zeta}\!\left(u, \widehat{u}\right)\right]+\frac{1}{2}\mathbb{E}_{x\sim Q}\left[d_{-\zeta}\!\left(u, \widehat{u}\right)\right]\nonumber\\
        &= \frac{1}{2}\mathbb{E}_{x\sim P}\left[d_{-\zeta}\!\left(u, \widehat{u}\right)\right]+\frac{1}{2}\mathbb{E}_{x\sim Q}\left[d_{-\zeta}\!\left(u, \widehat{u}\right)\right]\nonumber\\
        &= \mathbb{E}_{x\sim \rho_{\mathcal{X}}}\left[d_{-\zeta}\!\left(u, \widehat{u}\right)\right]\nonumber,
    \end{align}
    which gives
    \begin{align}
    \label{eq:gleichsetzen1}
        2(\mathcal{R}(f) &-\mathcal{R}(f^\ast))
        =\mathbb{E}_{x\sim \rho_{\mathcal{X}}}\left[d_{-\zeta}\!\left(u, \widehat{u}\right)\right].
    \end{align}
    At the same time, we know from~\citet[Corollary~13]{reid2010composite} that (see~Eq.~\ref{eq:proof_ref_randw_cor13} for a proof)
    \begin{align}
        \label{eq:gleichsetzen2}
        2(\mathcal{R}(f)&-\mathcal{R}(f^\ast))= \mathbb{E}_{x\sim \rho_{\mathcal{X}}}\left[d_{-\BR}\!\left(u, \widehat{u}\right)\right].
    \end{align}
    \wz{From the strict properness of $\ell$ we get that $u=\rho(y=1|x)$ and Eq.~\ref{eq:bayes_risk_characterization} follows from Lemma~\ref{lemma:technical_lemma}.}
    \end{proof}

\subsection{Main Characterization of Losses}
\label{subsec:main_characterization_theorem_proof}

Next, we prove our main characterization result.
\begin{proof}[Proof of Theorem~\ref{thm:loss_characterization}]
    To prove sufficiency of Eq.~\ref{eq:loss_characterization}, note that the conditional Bayes risk of the loss function in Eq.~\ref{eq:loss_characterization} satisfies
    \begin{align*}
        \CR\!\left(\eta,\widehat{y}\right) &= \eta\ell(1,\widehat{y}) + (1-\eta) \ell(-1,\widehat{y})\\
        &= \gamma(\Psi^{-1}(\widehat{y}))+\left(\eta-\Psi^{-1}(\widehat{y})\right)\gamma'(\Psi^{-1}(\widehat{y}))
    \end{align*}
    with strictly concave $\gamma$ defined by the strictly convex function $\phi$.
    It follows from Theorem~\ref{thm:savage} that $\ell$ with $\BR=\gamma\circ\Psi^{-1}$ is proper composite with link $\Psi:[0,1]\to\mathbb{R}$.
    Moreover, $\ell$ is strictly proper composite since $\BR''(\wy)=\gamma(\Psi^{-1}(\wy))<0$ for all $\wy\in\mathbb{R}$.    
    Applying Lemma~\ref{lemma:menon_and_ong} gives
    \begin{align*}
        \mathcal{R}(f)-\mathcal{R}(f^\ast)
        =\frac{1}{2} B_{-\BR^\diamond}(\beta,g\circ f)
        =\frac{1}{2} B_{-\gamma^\diamond}(\beta,g\circ f)
        =\frac{1}{2} B_{\phi}(\beta,g\circ f)
    \end{align*}
    because $\gamma^\diamond(z)=\phi(z)-c_2 z-c_1$, the fact that Bregman divergences are unique up to affine terms of the generator and the equality $g(y)=\frac{\Psi^{-1}(y)}{1-\Psi^{-1}(y)}$ which follows from the definition of $\Psi^{-1}$.

    For proving necessity of Eq.~\ref{eq:loss_characterization} assume $\ell$ is a strictly proper composite loss function with invertible link function $\Psi:[0,1]\to\mathbb{R}$ satisfying Eq.~\ref{eq:characterization_loss_representation}.
    From $\Psi^{-1}(\widehat{y})=\frac{g(\widehat{y})}{1+g(\widehat{y})}$ we obtain $g(\widehat{y})=\frac{\Psi^{-1}(\widehat{y})}{1-\Psi^{-1}(\widehat{y})}$.
    That is, all requirements for Lemma~\ref{lemma:characterization} are satisfied and it's application gives, for all $\widehat{\eta}\in[0,1]$,
    \begin{align}
    \label{eq:proof_gamma_is_CR}
        \BR(\wy)=\gamma\!\left(\Psi^{-1}(\wy)\right).
    \end{align}
    Next, we construct a strictly proper loss function in the same way as it is done in~\cite[Theorem~7]{reid2010composite} and~\cite[Corollary~3]{bao2023proper}.
    We use Theorem~\ref{thm:savage} to obtain
    \begin{align*}
        \CR\!\left(\eta,\widehat{y}\right) = \gamma(\Psi^{-1}(\widehat{y}))+\left(\eta-\Psi^{-1}(\widehat{y})\right)\gamma'\!\left(\Psi^{-1}(\widehat{y})\right)
    \end{align*}
    for all $\eta\in[0,1]$ and $\widehat{y}\in\mathbb{R}$, which can be applied since $\Psi^{-1}(\widehat{y})\in (0,1)$ as $g$ is strictly monotonically increasing.
    The form of the loss function in Eq.~\ref{eq:loss_characterization} follows now from the definition
    \begin{align*}
        \CR\!\left(\eta,\widehat{y}\right) &= \eta \ell(1,\widehat{y}) + (1-\eta) \ell(-1,\widehat{y}),
    \end{align*}
    for $\eta=0$ and $\eta=1$.
\end{proof}

\subsection{Characterization of Convex Losses}
\label{subsec:novel_result_convex_losses}

Next, we prove our characterization of convex losses by properties of $\phi,g$.
\begin{proof}[Proof of Corollary~\ref{cor:convexity}]
    Our aim is to apply Theorem~\ref{thm:reid_convex}.
    In the following denote by $\weta:= \Psi^{-1}(\wy)$.
    Applying Theorem~\ref{thm:shuford} with $w$ as defined in Eq.~\ref{eq:shuford_thm} gives
    \begin{align}
    \frac{\partial }{\partial \wy} \CR(\eta,\wy) &= (1-\eta)\ell_{-1}'(\wy)+\eta \ell_1'(\wy)\nonumber\\
    &= \eta \left(\weta-1\right) w\!\left(\weta\right)+(1-\eta) \weta w\!\left(\weta\right)\nonumber\\
    &= (\weta-\eta) w\!\left(\weta\right)\nonumber
    \end{align}
    and, together with Theorem~\ref{thm:savage}, we have
    \begin{align*}
        w(\weta) &=
        \frac{1}{\eta-\weta} \frac{\partial }{\partial \wy} \CR(\eta,\wy)\\
        &= \frac{1}{\eta-\weta} \frac{\partial }{\partial \wy} \left(\BR(\weta)+(\eta-\weta) \BR'(\weta) \right)\\
        &= \frac{1}{\eta-\weta} \frac{\partial }{\partial \wy} \left(\BR'(\weta)-\BR'(\weta) + (\eta-\weta)\BR''(\weta) \right)\\
        &=-\BR''(\weta)\\
        &= -\gamma''(\weta),
    \end{align*}
    where the last equality follows from Lemma~\ref{lemma:characterization}.
    Note that
    \begin{align*}
        \gamma'(\weta) &= \phi\!\left( \frac{\weta}{1-\weta} \right) +\frac{1}{\weta-1} \phi'\!\left( \frac{\weta}{1-\weta} \right)+c_1\\
        \gamma''(\weta)&=\frac{1}{(\weta-1)^3} \phi''\!\left( \frac{\weta}{1-\weta} \right)\\
        \gamma'''(\weta)&= \frac{1}{(\weta-1)^5} \phi'''\!\left( \frac{\weta}{1-\weta} \right)-\frac{3}{(\weta-1)^4}\phi''\!\left( \frac{\weta}{1-\weta} \right)
    \end{align*}
    and
    \begin{align*}
        \Psi(\weta) &=
        g^{-1}\!\left(\frac{\weta}{1-\weta}\right)\\
        \Psi'(\weta) &=
        \frac{1}{(1-\weta)^2}
        \left( g^{-1}\right)'\!\left(\frac{\weta}{1-\weta}\right)\\
        \Psi''(\weta) &=
        \frac{1}{(1-\weta)^4}
        \left( g^{-1}\right)''\!\left(\frac{\weta}{1-\weta}\right)-\frac{2}{(1-\weta)^3}
        \left( g^{-1}\right)'\!\left(\frac{\weta}{1-\weta}\right),
    \end{align*}
    which gives
    \begin{align*}
        \frac{w'(\weta)}{w(\weta)} &=
         \frac{\phi'''\!\left(\frac{\weta}{1-\weta}\right)}{\phi''\!\left(\frac{\weta}{1-\weta}\right) (\weta-1)^2}-\frac{3}{\weta-1}\\
         \frac{\Psi''(\weta)}{\Psi'(\weta)} &=
         \frac{\left(g^{-1}\right)''\!\left(\frac{\weta}{1-\weta}\right)}{\left(g^{-1}\right)'\!\left(\frac{\weta}{1-\weta}\right) (\weta-1)^2}-\frac{2}{\weta-1}.
    \end{align*}
    It holds that $\Psi^{-1}(z)=\frac{g^{-1}(z)}{g^{-1}(z)+1}$ for $g:\mathbb{R}\to\mathbb{R}$ being strictly monotonically increasing.
    That is, $\left(\Psi^{-1}\right)'(z)>0$ and Theorem~\ref{thm:reid_convex} grants convexity of losses if and only if
    \begin{align*}
        -\frac{1}{\weta}\leq
        \frac{\phi'''\!\left(\frac{\weta}{1-\weta}\right)}{\phi''\!\left(\frac{\weta}{1-\weta}\right) (\weta-1)^2}
        -
        \frac{\left(g^{-1}\right)''\!\left(\frac{\weta}{1-\weta}\right)}{\left(g^{-1}\right)'\!\left(\frac{\weta}{1-\weta}\right) (\weta-1)^2}-\frac{1}{\weta-1}
        \leq \frac{1}{1-\weta},\quad\forall \weta\in(0,1).
    \end{align*}
    Substituting $x:=\frac{\weta}{1-\weta}$ such that $\weta=\frac{x}{1+x}$, $-\frac{1}{\weta}=-\frac{1+x}{x}$ and $-\frac{1}{\weta-1}=1+x$ gives
    \begin{align*}
        -\frac{1+x}{x}\leq
        \frac{\phi'''\!\left(x\right)}{\phi''\!\left(x\right)} (1+x)^2
        -
        \frac{\left(g^{-1}\right)''\!\left(x\right)}{\left(g^{-1}\right)'\!\left(x\right)}(1+x)^2 + (1+x)
        \leq 1+x,\quad\forall x\in[0,\infty)
    \end{align*}
    and consequently
    \begin{align*}
        -\frac{(1+x)^2}{x}\leq
        \frac{\phi'''\!\left(x\right)}{\phi''\!\left(x\right)} (1+x)^2
        -
        \frac{\left(g^{-1}\right)''\!\left(x\right)}{\left(g^{-1}\right)'\!\left(x\right)}(1+x)^2
        \leq 0,\quad\forall x\in[0,\infty).
    \end{align*}
\end{proof}

\section{Details on Empirical Evaluations}
\label{sec:appendix_experiments}

The computation of the results for the domain adaptation benchmark experiment is based on gradient-based training of overall 9174 models which we built following~\citet{gruber2024overcoming} which uses the code-base of~\citet{dinu2022aggregation}.
We obtain the number of trained models as follows.

AmazonReviews (text data):
$11$ methods $\times\ 14$ parameters $\times\ 12$ domain adaptation tasks $\times\ 3$ seeds $= 5544$ trained models

MiniDomainNet (image data):
$11$ methods $\times\ 8$ parameters $\times\ 5$ domain adaptation tasks $\times\ 3$ seeds $= 1320$ trained models

HHAR (sensory data):
$11$ methods $\times\ 14$ parameters $\times\ 5$ domain adaptation tasks $\times\ 3$ seeds $= 2310$ trained models

Following~\citet{dinu2022aggregation} we used $11$ domain adaptation methods from the AdaTime~\citep{ragab2023adatime} benchmark.
For each of these methods and domain adaptation modalities (text, image, sensory) we evaluate 4 density ratio estimators (KuLSIF, LR, Boost and EW). 
For our experiments we follow the model implementations and experimental setup of~\citet{dinu2022aggregation} which results in the usage of fully connected networks for Amazon Reviews and a pretrained ResNet-18 backbone for MiniDomainNet.

For training and selecting the density ratio estimation methods within this pipeline we follow~\citet{gruber2024overcoming} perform an additional train/val split of $80/20$ on the datasets that are used for training the domain adaption methods.
The regularization parameter $\lambda$ is selected from $\{10, 10^{-1}, 10^{-3}\}$ and the number of iterations of the BFGS algorithm is fixed with 100. We follow~\citet{kanamori2012statistical} in using the Gaussian kernel with kernel width set according to the median heuristic for all compared density ratio estimation methods.
Each experiment is run $3$ times.
To test the performance of the compared methods the classification accuracy on the respective test sets of the target distribution is evaluated.
The rcond parameter for inverting the Gram matrix in~\citep{dinu2022aggregation}, we used a value of $10^{-3}$. 

\subsection{Amazon Reviews}
\label{subsec:amazon_reviews}
\begin{table}[H]
\resizebox{\linewidth}{!}{%
}
\caption{MMDA}
\label{tab:MiniDomainNet_MMDA}
\end{table}

\end{document}